\newenvironment{proof}{Proof:}{\hfill$\square$}
\newcommand\oprocendsymbol{\hbox{$\Diamond$}}
\newcommand\oprocend{\relax\ifmmode\else\unskip\hfill\fi\oprocendsymbol}
\begin{document}


\RUNAUTHOR{Yue and Smith}

\RUNTITLE{Robotic-Based Compact Storage and Retrieval Systems}

\TITLE{\Large Minimizing Robot Digging Times to Retrieve Bins in Robotic-Based Compact Storage and Retrieval Systems}

\ARTICLEAUTHORS{%
\AUTHOR{Anni Yue and Stephen L. Smith}
\AFF{Department of Electrical and Computer Engineering, University of Waterloo,  Waterloo, ON, Canada \\  \EMAIL{anni.yue@uwaterloo.ca; stephen.smith@uwaterloo.ca}}
} 

\ABSTRACT{Robotic-based compact storage and retrieval systems provide high-density storage in distribution center and warehouse applications. 
In the system, items are stored in bins, and the bins are organized inside a three-dimensional grid. Robots move on top of the grid to retrieve and deliver bins. 
To retrieve a bin, a robot removes all bins above one by one with its gripper, called bin digging. 
The closer the target bin is to the top of the grid, the less digging is required to retrieve the bin.
In this paper, we propose a policy to optimally arrange the bins in the grid while processing bin requests so that the most frequently accessed bins remain near the top of the grid. This improves the performance of the system and makes it responsive to changes in bin demand. 
Our solution approach identifies the optimal bin arrangement in the storage facility, initiates a transition to this optimal set-up, and subsequently ensures the ongoing maintenance of this arrangement for optimal performance. 
We perform extensive simulations on a custom-built discrete event model of the system. 
Our simulation results show that under the proposed policy more than half of the bins requested are located on top of the grid, reducing bin digging compared to existing policies.
Compared to existing approaches, the proposed policy reduces the retrieval time of the requested bins by over 30\% and the number of bin requests that exceed certain time thresholds by nearly 50\%. 
}


\KEYWORDS{material handling; compact storage;  automated storage and retrieval systems; online optimization; performance~analysis}

\maketitle

%
\section{Introduction}
\label{sec:Introduction}
Warehousing and material handling systems are currently adapting to changes in customer purchasing behavior and modern distribution strategies. 
Online shopping, same- or next-day delivery, store pick-up, metro e-commerce centers (MECs), micro-fulfillment centers (MFCs), and dark stores require warehouses to have high efficiency, high reliability, high storage density, high energy efficiency, and low cost, which overlap with the well-known objectives of automated storage and retrieval systems (AS/RS).
Among various types of AS/RS, robot-based compact storage and retrieval systems (RCS/RS) stand out as a great solution for storing small or miniature items in limited space, providing high throughput, and maintaining order-picking accuracy.

\begin{figure}[ht]
    \begin{subfigure}{0.49\linewidth}
         \centering
         \includegraphics[width=\textwidth]{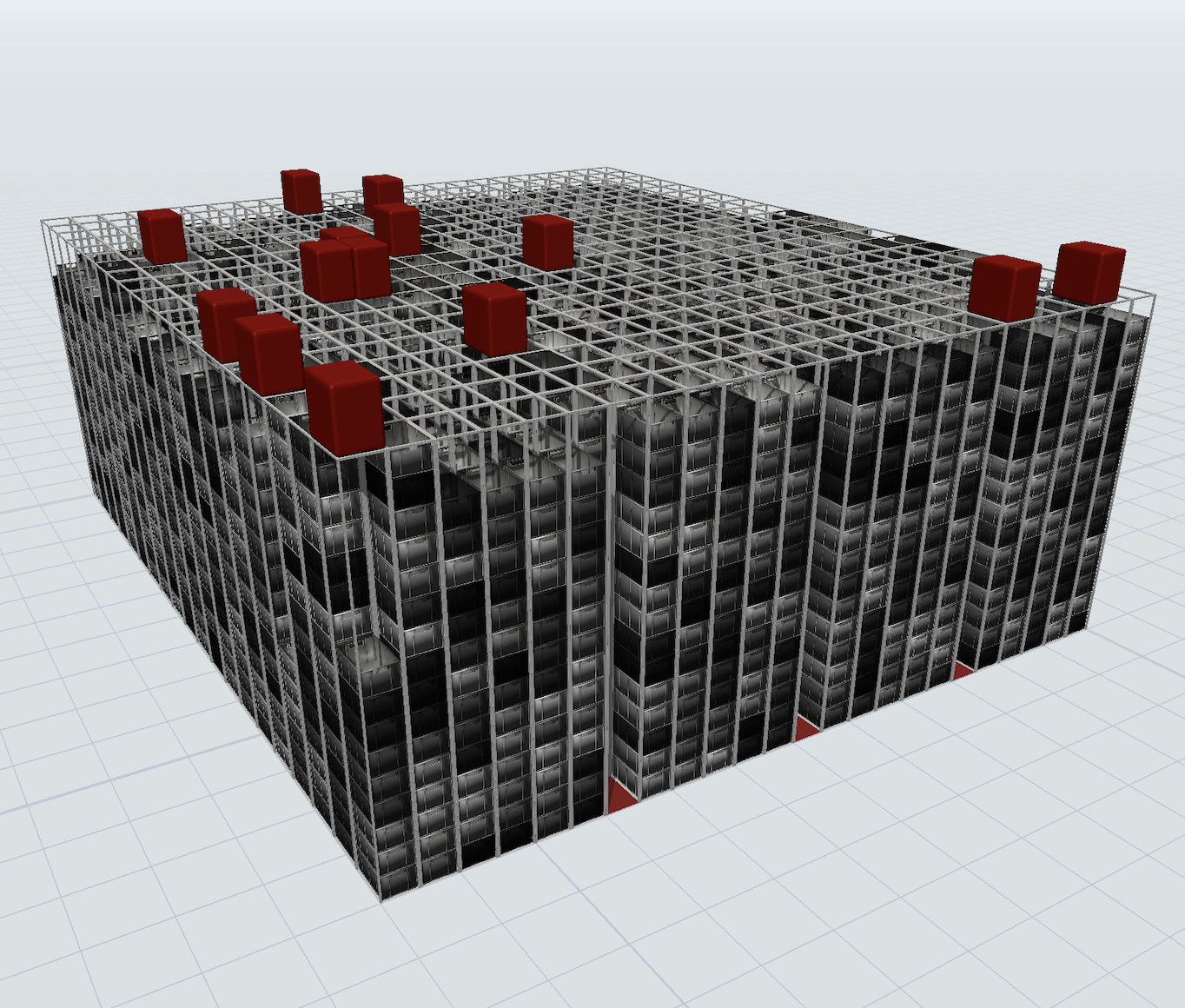}
         \caption{Oblique view}
         \label{fig:obview}
     \end{subfigure}
     \hfill
     \begin{subfigure}{0.49\linewidth}
         \centering
         \includegraphics[width=\textwidth]{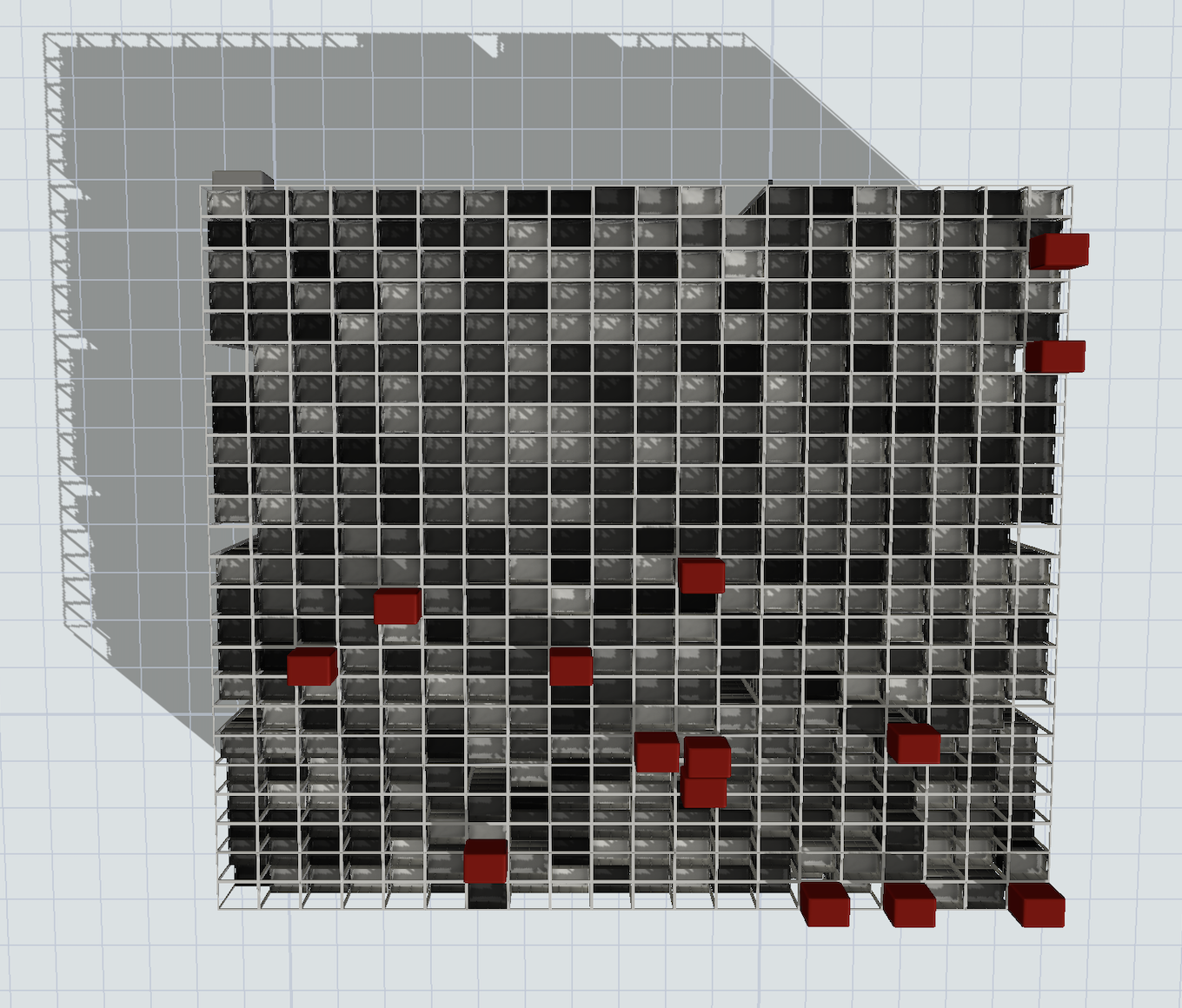}
         \caption{Top view}
         \label{fig:tpview}
     \end{subfigure}
    \caption{Robotic-Based Compact Storage and Retrieval System}
    \label{fig:RCSRS}
\end{figure}

RCS/RS, such as the systems of AutoStore and Ocado, store small items in standardized bins that are organized in a three-dimensional grid (see \Cref{fig:RCSRS}). 
Robots work on top of the grid to retrieve the bin containing the requested item and deliver the bin to the workstation (goods-to-person or GTP), where a worker (human or machine) completes the order-picking or storage-replenishing tasks. 
The bins returned from workstations are placed on top of the grid, which makes bins with high-demand items move to the top of the grid and bins with low-demand items sink to the bottom.
\citet{autostoreCasesWeb} has successfully deployed the system for companies including
PUMA and Best Buy in the United States, MS Direct in Switzerland, Elotec and XXL in Norway, Siemens in Germany, and Decathlon in Canada.
\citet{galka2020autostore} publish an online survey of 64 participants (AutoStore users) that provides the system design parameters, discusses the advantages and disadvantages of the system, describes the factors that need to be considered in each phase of the system (planning, realization and operation), etc.

Existing research on RCS/RS has evaluated storage policies with different types of storage stack and reshuffling methods for a given inventory and system configuration \citep{zou2018operating}, and has investigated policies to add new bins to existing systems \citep{beckschafer2017simulating}.
However, a key challenge that remains unaddressed is how the system should respond when there is a sudden change in the demand for bins (i.e., how often each bin is requested). Such changes can occur due to seasonal turnover of products or promotional activities and sales.
Our objective is to ensure high throughput of the system without suspending order picking.
To achieve this goal, we study the optimal arrangement of bins in the grid and propose methods that incrementally transition an initial arrangement to an optimal one while processing bin requests.  

\subsection{Contributions}
The key contributions of this paper are three-fold:
\begin{enumerate}
    \item Given the demand level of each bin, we derive the optimal bin arrangement within the grid, which guarantees the minimum expected time to retrieve one bin from the system.
    \item On the basis of the optimal bin arrangement, we build a set of target bin arrangements within the grid, which reduces the time to retrieve a sequence of bins from the system (with returns).
    \item We propose a policy that selects storage locations in the grid for bins returned from workstations, which gradually transforms any given initial bin arrangement into one that is in the set of target bin arrangements in the grid.
\end{enumerate}

We build discrete event simulation models to validate the proposed approach and compare it with two existing methods \citep{zou2018operating}.
The simulation results show that our approach outperforms their methods in terms of less digging work, less time to retrieve the requested bins, and less work time for the robots to complete the same amount of bin requests.

\subsection{Literature Review} 
\textbf{History of AS/RS:} 
Over the past few decades, with the rapid evolution of technology, the material handling industry has continued to make AS/RS more efficient, intelligent, and cost-effective.
\citet{roodbergen2009survey} and \citet{gagliardi2012models} conduct surveys covering the developments and characteristics of various AS/RS.
\citet{olsson2019framework} review the rapid growth of last-mile logistics and conclude directions for further research in the field of material handling.
Meanwhile, a variety of optimization methods have been developed for order-picking planning to deal with different types of AS/RS \citep{van2018designing}.

AS/RS were originally designed to handle bulky cargoes \citep{gharehgozli2015scheduling}. 
Today, the system use-cases have been expanded, allowing them to effectively manage storage units of different sizes at different stages of the logistics process.
\citet{azadeh2019robotized} review the development of modern AS/RS and emphasize the trend of using robotic technology in distribution centers. 
Their work highlights autonomous vehicle-based storage and retrieval systems (AVS/RS, which are aisle-based shuttle systems), robotic-based compact storage and retrieval systems (RCS/RS, which are grid-based shuttle systems), and robotic mobile fulfillment systems (RMFS) in new robotized warehouses, and identifies gaps for further research.

In contrast to AVS/RS and RCS/RS, which use robots to deliver containers to workstations, RMFS uses automated guided vehicles (AGVs) or autonomous mobile robots (AMRs) to move racks to workstations.
\citet{bozer2018simulation} compare the performance of AVS/RS and RMFS in simulation and discuss their benefits and limitations.
Kiva Systems is a typical RMFS, which is used by clients such as online retailers, where each order contains only a few items with internal connections (e.g., usually ordered together). 
The system can store such items on the same rack, and thus one delivery can retrieve multiple required items, improving the picking speed. 
A considerable amount of work has been done to optimize the assignment of robots to workstations~\citep{zou2017assignment}, storage racks~\citep{weidinger2018storage}, and storage zones~\citep{roy2019robotf} within the facility.
To improve system performance, models are built to suggest rack layout in the pre-design stage~\citep{wang2020travel} and to assign and sequence orders and racks to workstations~\citep{valle2021order}. 
When investigating RCS/RS, we can refer to the above work, since those assignments also happen in RCS/RS.
However, RMFS do not fully utilize the vertical space (limited by the height of the rack) and the horizontal space (reduced by floor space for robot movements) in a warehouse, resulting in a lower storage density compared to RCS/RS.

RCS/RS share similarities with AVS/RS, and they face similar challenges.
Both AVS/RS and RCS/RS use robots to retrieve and move containers (e.g., bins).
The horizontal motions of the robots in both systems are very similar. The robots travel in x and y directions on fixed paths or tracks. 
Generally, there are multiple robots that work collaboratively in a system to improve throughput. Therefore, both systems require dynamic routing, congestion management, and collision avoidance to improve their reliability and safety within compact and densely packed structures.
The two systems differ in the vertical movements of the robots in the z direction.
AVS/RS store bins on shelves and use lifts to move robots between layers, thus robots are able to move on any layers.
RCS/RS remove shelves and stack bins on top of each other. To retrieve a bin, a robot needs to use its gripper to remove the bins above the target bin one by one.
Due to the big difference, some of the research findings and conclusions about AVS/RS are inappropriate or inapplicable to RCS/RS.
In the following, we first present extensive existing studies on AVS/RS.
Then, we present the studies on RCS/RS that we are able to find.

\textbf{AVS/RS:} 
\citet{malmborg2002conceptualizing} propose Markov chain models to analyze system performance.
They carefully explain the design parameters of the system (e.g., the number of storage columns, tiers, lifts and vehicles) and state the evaluators to estimate the performance of the system (e.g., order picking cycle time and vehicle utilization).
Subsequent studies have furthered our understanding of AVS/RS by providing more detailed and comprehensive analyses.
In the following, we provide a concise overview from two key perspectives: design factors and performance evaluators. 

To design an AVS/RS that best fits the operation requirements, a designer should consider design parameters such as
storage policies \citep{malmborg2002conceptualizing}, 
scheduling rule, input/output (I/O) locations, interleaving rule \citep{ekren2010simulation2},
rack configuration \citep{ekren2010simulation1,marchet2013development}, 
dwell point policies, location of cross-aisles \citep{roy2015queuing},
number of tires, type of shuttles \citep{tappia2017modeling},
and system types, i.e., vertical and horizontal \citep{azadeh2019design}.

To effectively assess the performance of an AVS/RS for various purposes, there are a range of performance evaluators to choose from, including
order picking cycle time, vehicle utilization \citep{malmborg2002conceptualizing}, 
storage and retrieval cycle times, system utilization, throughput capacity \citep{malmborg2003interleaving},
transaction service times, transaction waiting time \citep{kuo2007design}, 
resource utilization, costs and space requirements \citep{fukunari2009network}, 
transaction cycle time \citep{marchet2012analytical}, 
throughput of each station, mean number of jobs at each station, mean residence time, and throughput of the entire network \citep{ekren2012approximate}. 
 
\textbf{RCS/RS:}
The first AutoStore system appeared in the early $21$st century \citep{firstautostoreweb}. 
However, there is limited literature directly related to the system. Most research focused on RCS/RS has been published in the past five years.
We see that RCS/RS has garnered increasing attention recently and is one of the main research directions in the future \citep{azadeh2019robotized}.

\citet{zou2018operating} build analytical models using reduced semi-open queuing networks (SOQN) to predict system throughput time, expected waiting time of orders for robots and robots for workstations, and utilization of robots and pickers. 
They validate the models through simulations, where they compare two storage policies (dedicated and shared), two storage stacks (random and zoned), and two reshuffling methods (immediate and delayed).
They use real case data to verify analytic models for the shared storage policy with the random storage stack and immediate reshuffling. 
They also suggest the optimal length-to-width dimension for each strategy.   
However, their models have a number of simplifications (e.g., when moving on top of the grid, robots go along with the shortest path without stopping and can cross obstacles), and they do not consider the new bins being added from outside the grid.
\citet{beckschafer2017simulating} propose two input storage policies to add new storage items to the system. They examine their policies in a discrete event simulation environment, where robots use the A$^*$ algorithm \citep{hart1968formal} to plan the route for robots moving on top of the grid.

The performance of RCS/RS, similar to that of AVS/RS, is based on various design parameters. These parameters include the dimension of the grid, the number of robots and workstations, the order structure, and the storage policy. 
Due to the complex interactions between these factors, the impact of each individual factor needs to be accurately analyzed through simulation studies.
Using simulation studies, \citet{simulationbasedanalysis} examine the impact of the number of robots in the system while keeping other factors constant. They find that the number of robots should align with the size of the grid and the number of workstations, as redundant robots can hinder system performance. Additionally, they emphasize that the distribution of bin requests plays a crucial role in determining the suitability of the system.
\citet{trost2022simulation} develop a discrete event simulation model to analyze the impact of many design parameters on system performance. Their findings reveal that
1) higher or fuller stacks result in lower throughput,
2) the location of the requested bin within a stack significantly affects throughput,
and 3) the optimal number of robots for a system is determined with great precision by means of simulations.

Some studies explore the broader aspects of RCS/RS operations (i.e., upstream and downstream assignments), including order rearrangement before entering the system, order picking at workstations, and order packing after leaving the system.
\citet{tjeerdsma2019redesign} redesign the AutoStore order processing line to improve productivity, i.e., the number of orders per man hour. 
The proposed improvements include the balance of workload between offline and online workstations, the minimization of order travel distances, the implementation of a cellular layout for similar process procedures, and the integration of automation into the packing process.  
\citet{ko2022rollout} propose a rollout heuristic algorithm to determine the optimal order processing sequence, effectively minimizing the total number of processed bins.
In reviewing synchronization difficulties in warehouses with parts-to-picker, \citet{boysen2022review} conclude that selecting an appropriate workstation configuration can significantly improve throughput efficiency.
At the workstation, robot arms replace humans to improve picking efficiency and accuracy. \citet{yang2022sesr} propose an AI-picking solution designed for items stored in small bins. 

Both the studies focused on the system itself and the broader studies related to the system are providing us with an increasingly complete and clear picture of RCS/RS.
However, a situation that requires further investigation is how the system should respond to changes in the frequency of requests for bins.  In this paper, we propose a solution for this situation that ensures high throughput and small retrieval times of bins but does not interrupt normal order-picking.

\subsection{Organization}
\Cref{lab: Sec system description} presents a detailed description of RCS/RS.
In \Cref{lab: Sec problem formulation}, we formalize the system, explain the notation used in the paper, and define the bin grid configuration problems.
In \Cref{lab: Sec BGC} and \Cref{lab: Sec Policy}, we derive an optimal bin grid configuration and propose a layer complete policy to minimize the bin retrieval time. 
In \Cref{lab: Sec result}, we introduce our simulation environment, evaluate the performance of the proposed policy, and compare our solution with two existing policies.
In the end, we draw conclusions and discuss the possibilities of future research in \Cref{lab: Sec conclusion}.

\section{System Description}
\label{lab: Sec system description}
\Cref{lab: the system} introduces key components of the specific RCS/RS studied in this paper.
\Cref{lab: the process} explains the basic operations and the workflow of the system.
\subsection{Robotic-Based Compact Storage and Retrieval System}\label{lab: the system}

The robotic-based compact storage and retrieval system provides a superior level of storage density by eliminating the wasted space of traditional shelving. In the rest of the paper, we use \emph{the system} to represent the RCS/RS. 

\begin{figure}[ht]
    \centering
    \begin{subfigure}{0.49\textwidth}
    \centering
    \includegraphics[width=\linewidth]{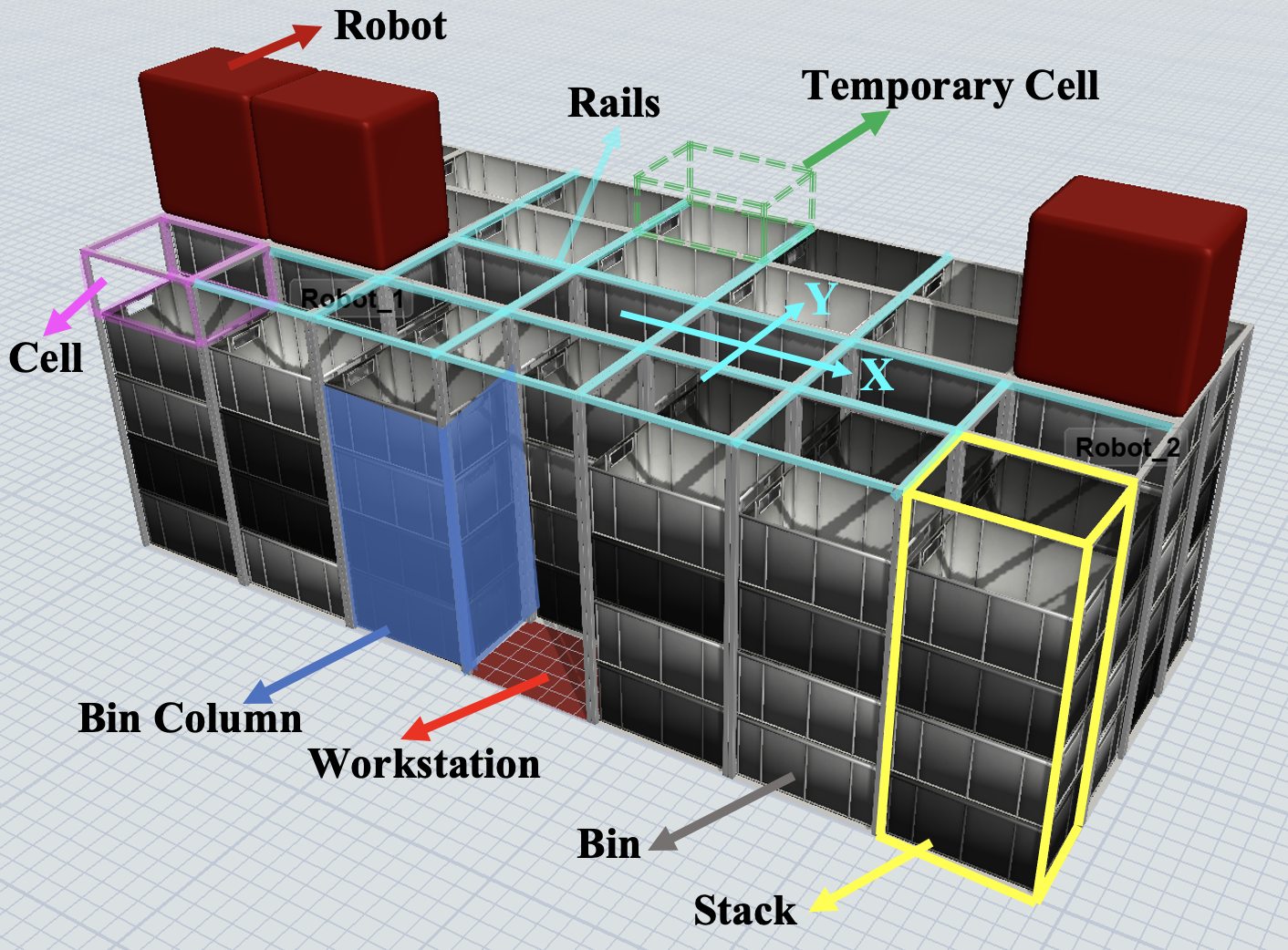}
    \caption{RCS/RS Key Components}
    \label{fig:SystemComponents}
    \end{subfigure}\hfill
    \begin{subfigure}{0.49\textwidth}
    \centering
    \includegraphics[width=0.95\textwidth]{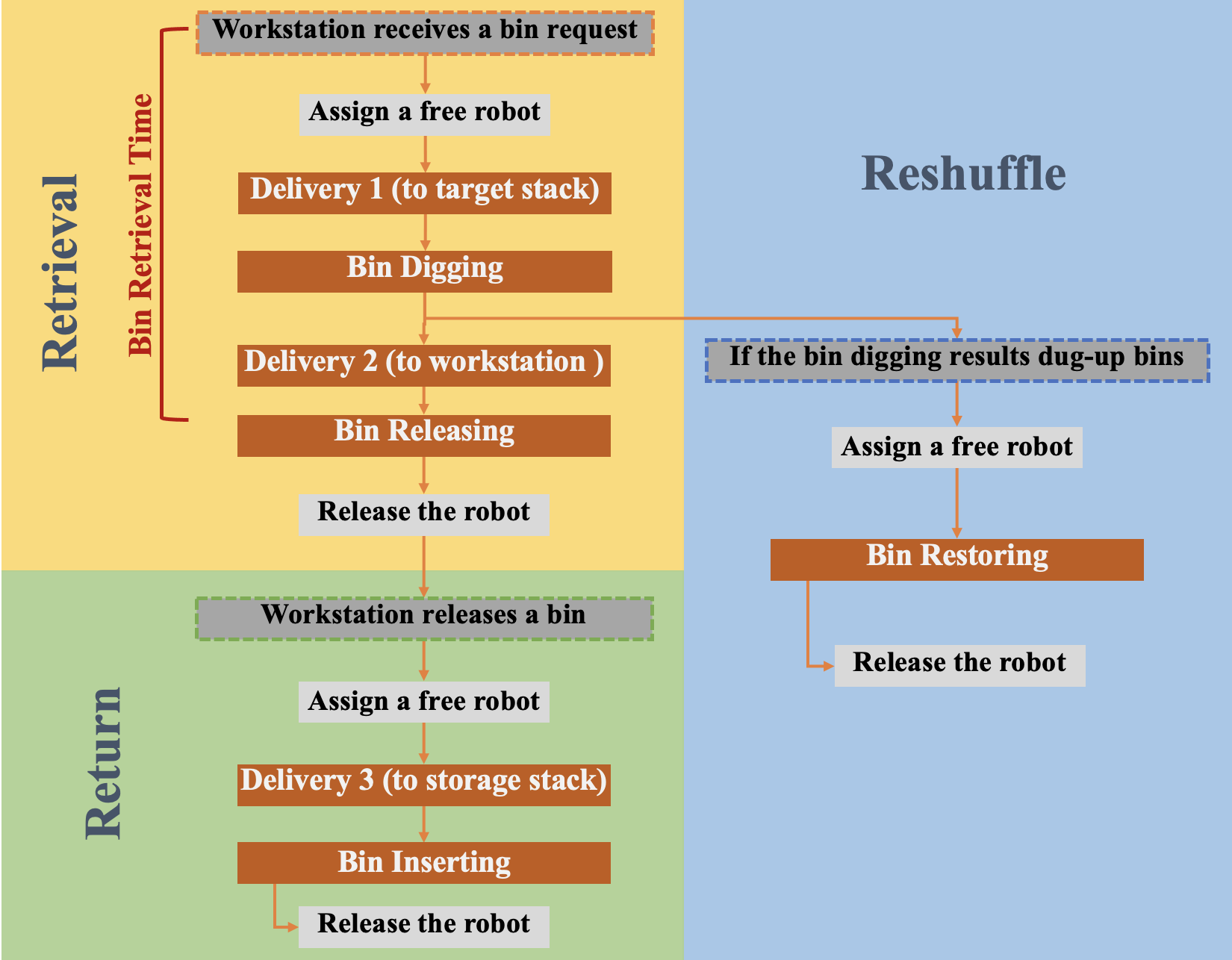}
    \caption{Flowchart of The System}
    \label{fig:my_flow}
    \end{subfigure}
    \caption{Introduction of RCS/RS}
\end{figure}

\Cref{fig:SystemComponents} presents the essential components of the system.
All small inventory items are stored in standardized \emph{bins}. 
The bins are stacked from bottom to top to form a \emph{bin column}, and each bin column is organized in a \emph{stack}.
The \emph{stack} is a modular vertical frame that maintains only the vertical alignment of the bins, but does not support each bin like a shelf.
Each bin space in the stack is a \emph{cell} (normal), and there is a \emph{temporary cell} above each stack. 
Depending on the shape of the warehouse footprint, the stacks are adjacently aligned to form a \emph{grid} with a flat top. 

The grid holds all the components together.
At the top of the grid, \textit{rails} are constructed in X and Y directions and allow robots to travel to any stack (we call this robot motion the \emph{delivery motion} in the rest of the paper).
The robot accesses a target bin from the top of the stack by removing, if there is any, the bins above the target bin one by one using its gripper (we call this robot motion the \emph{gripper motion} in the rest of the paper).
The dug-up bins are placed on top of nearby stacks that have empty cells (normal and temporary).
With the gripper motion and the delivery motion, any robot can access any bin in the grid and deliver the bin to the \emph{workstation}. 
At the workstation, a worker or robot processes the fulfillment and/or replenishment work.

In this paper, we study only the system and do not consider the impact of the warehouse management system (WMS). 
While a WMS enhances the system performance by reordering bin requests before they enter the system, it is important to note that the impact of this optimization varies from case to case. Different WMS use different logic, leading to varying effects on system performance.
To isolate this impact, we have the following assumption:
\begin{assumption}\label{lab: asp bin request}
    The system processes incoming bin requests on a first-come-first-served basis.
\end{assumption}

\subsection{The Working Process}
\label{lab: the process}
A free robot with no assigned task waits at the top of the stack or workstation where it finished its last task (\emph{dwelling point}) and enters a FIFO queue immediately after finishing its last task.
Each time a workstation receives a \emph{bin request}, the system processes the bin request following the workflow presented in \Cref{fig:my_flow}. 
Below is a detailed description of each task:
\begin{description}
    \item[\textbf{Delivery 1 (to target stack):}] A free robot moves from its dwelling point to the \emph{target stack} that stores the \emph{target bin}.
    \item[\textbf{Bin Digging:}] The same robot (in Delivery 1) digs up bins above the target bin one by one, if there are any, and places the dug-up bins on top of nearby stacks (those stacks and dug-up bins are then \emph{blocked}). The robot retrieves the target bin and brings it to the top of the target stack.
    \item[\textbf{Delivery 2 (to workstation):}] The same robot (in Delivery 1) delivers the target bin to the assigned workstation via the shortest route. 
    \item[\textbf{Bin Releasing:}] The same robot (in Delivery 1) drops the target bin at the workstation.
    \item[\textbf{Bin Restoring:}] If the bin request induces the dug-up bins in Bin Digging, a free robot leaves its dwelling point and restores the previous dug-up bins to the target stack in the same order. (After restoring the previous dug-up bins from the nearby stacks, these blocked bins and stacks are \emph{freed}.)
    \item[\textbf{Delivery 3 (to storage stack):}] After finishing the work with the target bin at the workstation, a free robot moves from its dwelling point to the workstation, collects the target bin, and delivers the bin to the storage stack of the grid.
    \item[\textbf{Bin Inserting:}] The same robot (in Delivery 3) drops the target bin on top of the storage stack.
\end{description}
In this mechanism, bins with high demand move to the top of the grid, while bins with low demand sink to the bottom of the grid. 
\section{Problem Formulation}
\label{lab: Sec problem formulation}
In this section, we first explain and list the essential notations to facilitate the system in \Cref{lab:Environment Model}.
Then, we introduce the bin retrieval cost and the expected bin retrieval cost in \Cref{sec:Computation of The Costs}.
Ultimately, we define the optimal bin grid configuration problem and the bin rearrangement problem in \Cref{lab:BGC problem}. 
\subsection{Storage System Model}
\label{lab:Environment Model}
To improve the performance of the system, we focus on the gripper motion and model the system as follows. 
Appendix~\ref{APX:Notations for System Analysis} summarizes essential notations of the model
and \Cref{fig:Notation1} presents an example to help the reader understand the notations.
In \Cref{Implementation Details}, we have additional design parameters to build a simulation model that mimics reality.
In the paper, given $X\in\mathbb{N}^+$, we use $\llbracket X\rrbracket$ to denote the set of integers ranging from $1$ to $X$, i.e., $\llbracket X\rrbracket=\{1,2,\dots,X\}$.

\begin{figure}[ht]
    \centering
    \begin{subfigure}[t]{0.48\textwidth}
    \centering
    \includegraphics[width=\textwidth]{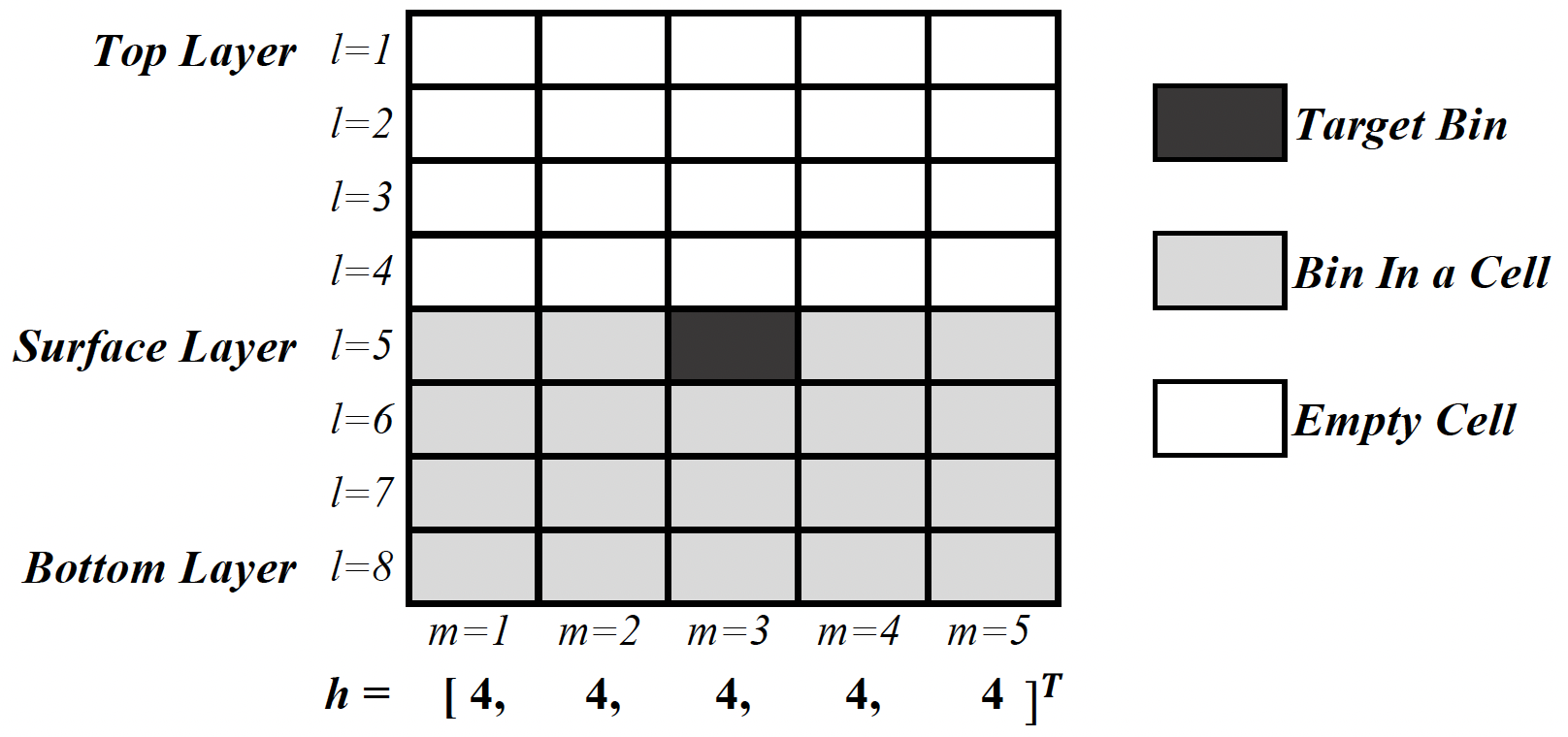}
    \caption{A system: $M=5$, $H=8$, $N=20$, $storage\ capacity = 40$, $h_c=4$ and $h_e=4$.}
    \label{fig:Notation1}
    \end{subfigure}\hfill
    \begin{subfigure}[t]{0.48\textwidth}
    \centering
    \includegraphics[width=0.85\textwidth]{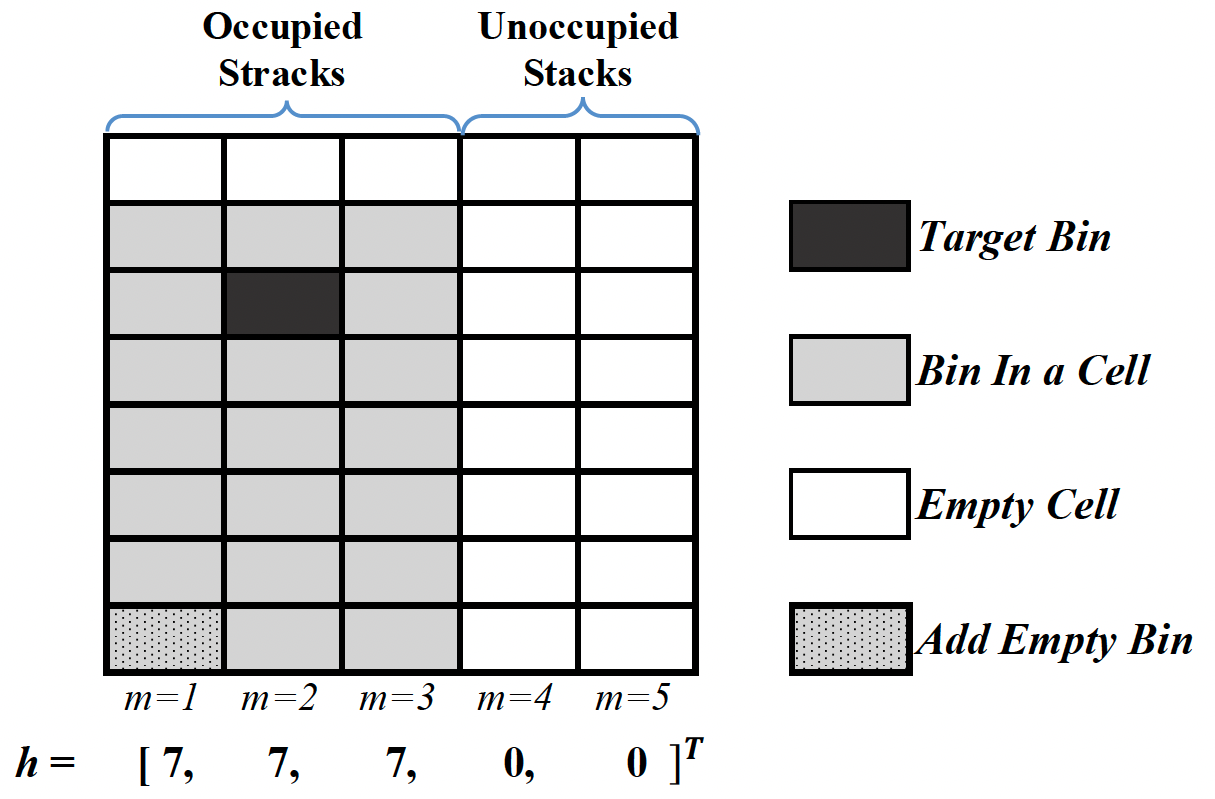}
    \caption{A system has fill level $h_c=7$, occupies three stacks ($m_f=3$), and has one empty bin.}
    \label{fig:Notation2}
    \end{subfigure}
    \caption{Example illustration of notations of the RCS/RS.}
\end{figure}

Consider a grid consisting of $M$ stacks, and $m\in \llbracket M\rrbracket $ represents the stack ID.
Each stack offers $H$ cells (i.e., a stack can store up to $H$ bins), and we define $H$ as the \emph{stack height}.
We use the term \emph{layer}, $l\in \llbracket H\rrbracket $, to locate a cell in a stack, where the top cell of a stack is in layer $1$ (\emph{top layer}) and the bottom cell of a stack is in layer $H$ (\emph{bottom layer}).
The \emph{surface layer} is the layer where the topmost bin is located.
The \emph{storage capacity} of the system, $M\cdot H$, is the total number of cells available for bin storage in the grid.

Within storage capacity, the grid stores $N$ bins (i.e., $N\leq M\cdot H$), and $n\in \llbracket N\rrbracket $ represents the \emph{bin ID}.
The \emph{popularity} of bin $n$, $0\leq p_n\leq 1$, indicates the frequency of bin $n$ being requested. 
Without losing generality, we normalize the bin popularity and ensure $\sum_{n\in \llbracket N\rrbracket }p_n=1$.
Note that we assign the bin ID according to the bin popularity, where the bin with a higher bin popularity is assigned with a smaller bin ID, i.e., $\forall i,j\in\llbracket N\rrbracket $ and $i<j$, we have $p_i\geq p_j$.

We use the term \emph{fill level}, $h_m$, to represent the number of bins stored in stack $m$, where $0\leq h_m\leq H$. 
The system has a \emph{fill level vector} $\textbf{\emph{h}}$, including fill levels of all stacks in the grid, which is defined as $\textbf{\emph{h}}=[h_1,h_2,\dots,h_M]^\top.$

\citet{zou2018operating} considered the percentage ($\tau$) of the total cells that were put aside for potential expansion in the future, and set the number of cells for bin storage in each stack as an integer $\overline{H}=H(1-\tau)$. Under their shared storage policy coupled with random storage stacks, $\textbf{\emph{h}}=[\overline{H},\overline{H},\dots,\overline{H}]^\top$ is the only possible value for a system (see \Cref{fig:Notation1}).

In our system, we remove the limitation of using all stacks and set the fill level to an integer $h_c$ for occupied stacks, where $\overline{H}\leq h_c\leq H$. 
We use the term \emph{empty level}, $h_e$, to represent the number of empty cells in occupied stacks, where $h_e=H-h_c$.
When processing bin requests, the initially occupied stacks need to maintain the fill level.
For the initially unoccupied stacks, they remain empty all the time.
Since our objective is to improve the performance of the system by reducing digging work, and here we consider only the gripper motion in our analysis, the positions of the occupied stacks in the grid do not affect the analysis. Therefore, we have the following remark:
\begin{remark}[Uniformly Occupied Stacks]\label{rmk:occupied stacks}
    (See \Cref{fig:Notation2}.)
    The number of occupied stacks is $m_f=\lceil N/h_c\rceil$.
    For the case when $N$ is not devisable by $h_c$, we add $m_f\cdot h_c-N$ empty bins. 
    We assign the newly added empty bins the bin ID as $N+1,\dots, m_f\cdot h_c$ and set their popularity at $0$. Finally, we update $N$ to $m_f\cdot h_c$.
    Consequently, $\textbf{\emph{h}}=[h_1,\dots,h_{m_f},h_{m_f+1},\dots,h_{M}]^\top=[h_c,\dots,h_c,0,\dots,0]^\top$. 
\end{remark}

The uniformly occupied stacks structure reduces the time to dig up some bins, but may need to dig up more bins above the target bin, which creates a trade-off between the average digging time and the number of bins that need to be dug out.

To describe the arrangement of the bins in the grid, we introduce the following definitions:
\begin{definition}[Bin Grid Configuration, BGC]
A bin grid configuration is an $H \times M$ matrix $\mathbf{B}$ where each entry $\mathbf{B}_{l,m}$ gives the ID of the bin in stack $m$ and layer $l$ within the grid. If a cell is empty, then $\mathbf{B}_{l,m}$ is set to $0$.
\end{definition}

We use $p_{\mathbf{B}_{l,m}}$ to represent the popularity of the bin stored in the cell of stack $m$ and layer $l$, where $p_{0}=0$. 
With all the bins in the grid, $\sum_{l\in \llbracket H\rrbracket }\sum_{m\in \llbracket M\rrbracket }p_{\mathbf{B}_{l,m}}=1$.

\begin{definition}[The Set of All Feasible BGCs]
Let $\Phi\subset N^{H \times M}$ be the set of all feasible bin grid configurations. This set contains all matrices $ \mathbf{B}$ of the following form:
\begin{equation}
        \begin{aligned}
            \mathbf{B} &=  \begin{bmatrix}
                            \mathbf{0}_{h_e\times m_f}    & \mathbf{0}_{h_e\times (M-m_f)}\\
                            \mathbf{B}^{'}_{h_c\times m_f}       & \mathbf{0}_{h_c\times (M-m_f)}
                        \end{bmatrix}\textrm{,}
        \end{aligned}
    \end{equation}
    where $m_f = \lceil N/h_C\rceil$ is the number of occupied stacks and $h_e+h_c=H$.
    Each nonzero entry of $\mathbf{B}^{'}$ is a unique value of $\llbracket N\rrbracket $.
\end{definition}

\begin{definition}[The Equivalent Class of a BGC]
\label{rmk: eqc_r}
    In the set of all feasible BGCs $\Phi$, we define the equivalence relation $\sim$ as follows:
    given $\overline{\mathbf{B}}\in\Phi$ and $\mathbf{B}\in\Phi$, then $\overline{\mathbf{B}}\sim\mathbf{B}$ if and only if for each column $i\in\{1,\ldots,m_f\}$ the entries in $\overline{\mathbf{B}}$ and $\mathbf{B}$ are the same (with possibly different ordering).
    We let $[\overline{\mathbf{B}}]:=\{\mathbf{B}\in\Phi:\overline{\mathbf{B}}\sim\mathbf{B}\}$ denote the equivalence class to which $\overline{\mathbf{B}}$ belongs.
\end{definition}

With all the definitions and notation introduced in this section, we can dive into the system performance analysis. 
Next, we set the evaluation variables and define the problems studied in this paper. 

\subsection{Costs Associated with Bin Retrieval}
\label{sec:Computation of The Costs}
\begin{figure}[ht]
    \centering
    \includegraphics[width=0.65\linewidth]{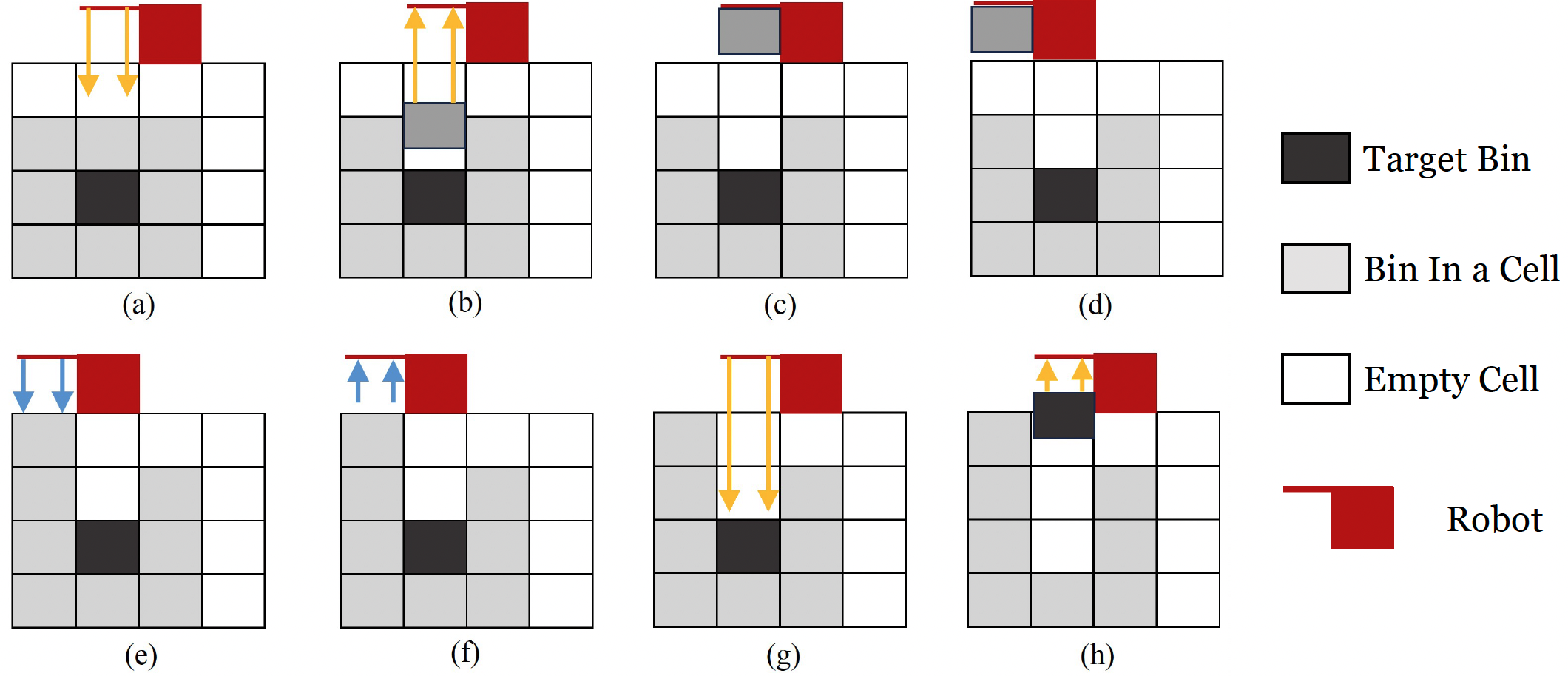}
    \caption{Example illustration of the bin retrieval process (the arrows indicate the gripper motion and direction)}
    \label{fig:intro_costs}
\end{figure}

In an $H$ height grid, to retrieve a bin from layer $l$ in a target stack $m$, a robot needs to dig up bins above the target bin one by one and place them on top of the nearby occupied stack(s) $k\in  \llbracket M\rrbracket/\{m\}$ (see \Cref{fig:intro_costs}). 
Focusing on the gripper motion and excluding the delivery motion, we decompose the bin retrieval time, $C_r$, into two parts:
\begin{enumerate}
    \item $C_{r,1}(l, H, h_m)$: the cost of digging up bins from the target stack, i.e., the cost incurred in the target stack (represented by yellow arrows in \Cref{fig:intro_costs}).   
    \item $C_{r,2}(l, H, h_m, h_k)$: the cost of placing the bins removed above the target bin on top of the nearby stacks, i.e., the costs incurred outside of the target stack (represented by blue arrows in \Cref{fig:intro_costs}).
\end{enumerate}

In addition, all occupied stacks have the same fill level, $h_m=h_k=h_c$.
As a result, the bin retrieval cost is computed by adding the two parts:
\begin{equation}
    \label{eqn:BRC}
    C_r(l, H, h_c) = C_{r,1}(l, H, h_c)+C_{r,2}(l, H, h_c)\textrm{.}
\end{equation}

The cost also depends on some other parameters, such as the dimension of the grid (the height of the bin) and the moving speed of the robot gripper.
To simplify the analysis, we apply a few approximations when deriving the cost functions in \Cref{sec:costFCN_aprox}.

With a given BGC $\mathbf{B}$, we can compute the expected bin retrieval cost for a single bin request $E_1[C_r]$, which is the weighted average of the bin retrieval cost of each bin in the given BGC, by the following equation
\begin{equation}
    \label{eqn:BRC_expected}
    \begin{aligned}
        E_1[C_r]    &=\sum_{l\in  \llbracket H\rrbracket}\sum_{m\in  \llbracket M\rrbracket}C_r(l, H, h_c)\cdot p_{\mathbf{B}_{l,m}} \\
                    &=\sum_{l\in  \llbracket H\rrbracket}\Big(C_r(l, H, h_c)\cdot\sum_{m\in  \llbracket M\rrbracket} p_{\mathbf{B}_{l,m}}\Big) \\
                    &=\sum_{l\in \llbracket H\rrbracket}C_r\Big(l, H, \Omega(\mathbf{B})\Big)\cdot \pi_l(\mathbf{B})\textrm{,}
    \end{aligned}
\end{equation}
where $\pi_l(\mathbf{B})=\sum_{m\in \llbracket M\rrbracket}p_{\mathbf{B}_{l,m}}$ is the probability that the target bin is in layer $l$ within the BGC $\mathbf{B}$. 
Note that $\sum_{l\in \llbracket H\rrbracket}\pi_l(\mathbf{B})=1$.
Since the fill level $h_c$ is a property of a BGC $\mathbf{B}$, we define a function $\Omega(\mathbf{B})=h_c$ that computes the fill level of the BGC.
\subsection{Bin Grid Configuration Problem}\label{lab:BGC problem}
In this paper, our objective is to reduce the bin retrieval time (see \Cref{fig:my_flow}) by reducing the \emph{digging workload} of the robots, which depends on the number of bins above the target bin and the \emph{digging depth} of the target bin. See \Cref{fig:Notation2} where the target bin has a digging depth of three and the number of bins above the target bin is one.
\begin{definition}[Digging Depth]\label{def:digging depth}
    The digging depth of a target bin is given by the layer it occupies within the grid. 
\end{definition}

First, we consider requesting only one bin from the system.
\begin{problem}[Optimal Bin Grid Configuration Problem]\label{Q1:1BGC}
Given a grid consisting of $M$ stacks, where each stack stores up to $H$ bins, and a total of $N (\leq M\cdot H)$ bins to be stored in the grid, find the BGC that minimizes the expected bin retrieval cost for a single bin request, i.e.,
\begin{equation}
    \label{eqn:minexp}
    \begin{aligned}
        &\displaystyle\min_{\mathbf{B}\in \Phi}\ \ && E_1[C_r] =\sum_{l\in \llbracket H\rrbracket}C_r\Big(l, H, \Omega(\mathbf{B})\Big)\cdot \pi_l(\mathbf{B})\\
        &\text{subject to}    && \sum_{l\in \llbracket H\rrbracket}\pi_l(\mathbf{B})=1\textrm{,}\\
                                & && \overline{H}\leq \Omega(\mathbf{B})\leq H\textrm{.}
    \end{aligned}
\end{equation}
We call the solution to (\ref{eqn:minexp}) an \emph{optimal BGC}, which is not unique to a system with multiple occupied stacks.
\end{problem}

\begin{definition}[The Set of All Optimal BGCs]
    The set of all optimal BGCs, $\mathbb{B}\subset \Phi$, is a set that contains all the solutions to (\ref{eqn:minexp}).
\end{definition}

If the target bin is not in the surface layer, after returning the bin and placing it on top of a storage stack, the optimal BGC determined by Problem \ref{Q1:1BGC} is not maintained, which means that the optimal BGC is not sustainable while fulfilling a series of bin requests.
To address this issue, we seek a set of more flexible BGCs that 1) deviate minimally from optimal BGCs and 2) are transformable and maintainable while processing a series of bin requests.
Using the equivalent class relation of a BGC (recall \Cref{rmk: eqc_r} at the end of \Cref{lab:Environment Model}), we define \emph{the set of equivalent optimal BGCs}:
\begin{definition}[The Set of Equivalent Optimal BGCs]\label{def:eq opt bgcs}
    The set of equivalent optimal BGCs $\hat{\mathbb{B}}\subset\Phi$ is a set containing all BGCs that the system aims to transform into, $\hat{\mathbb{B}}=\cup_{\mathbf{B}\in\mathbb{B}}[\mathbf{B}]$.
\end{definition}

Next, we design a policy that transforms any BGC into a BGC that belongs to the set of equivalent optimal BGCs while processing a series of bin requests $\boldsymbol{\sigma}=[\sigma_1, \sigma_2,\dots, \sigma_k,\dots]$.
We use $\sigma_k$ to denote the bin ID of the $k$th requested bin. With a slight abuse of notation, we denote the BGC after returning the $k$th requested bin by $\mathbf{B}^k$ where $\mathbf{B}^0$ is the initial BGC.
We have two possible initial situations, $\mathbf{B}^0\in\hat{\mathbb{B}}$ or $\mathbf{B}^0\notin\hat{\mathbb{B}}$.
To regulate and govern this transformation process, we address and solve the following problem.
\begin{problem}[Bin Rearrangement Problem]
\label{Q2:policy}
    Given an initial BGC $\mathbf{B}^{0} \notin \hat{\mathbb{B}}$ and a sequence of bin requests $\boldsymbol{\sigma}$, find a policy $\Gamma(\cdot)$ such that
    \begin{equation}
        \label{eqn: policy}
        \mathbf{B}^{k+1}=\Gamma(\mathbf{B}^{k}, \sigma_{k+1}) \text{ \ \ \  with \ \ \  } k\in\{0,1,\dots\} \textrm{, }
    \end{equation}
    where
    \begin{enumerate} 
        \item the policy transforms $\mathbf{B}^{0}$ into a $\mathbf{B}^{\lambda}\in \hat{\mathbb{B}}$ in $\lambda$ bin requests where $\forall k < \lambda$ we have $\mathbf{B}^{k}\notin \hat{\mathbb{B}}$;
        \item for every $k\geq \lambda$ the policy ensures that $\mathbf{B}^{k}\in[\mathbf{B}^{\lambda}]\subset\hat{\mathbb{B}}$, i.e., $[\mathbf{B}^{\lambda}]$ and $\hat{\mathbb{B}}$ are each positively invariant under $\Gamma(\cdot)$.
    \end{enumerate}
\end{problem}

Our strategy to solve Problem \ref{Q2:policy} is to gradually adjust the bins in each stack and bring the BGC closer to an equivalent optimal BGC (we provide the fine definition of the distance in \Cref{LGCP} before we measure it). 
We solve Problem \ref{Q1:1BGC} in \Cref{lab: Sec BGC} and Problem \ref{Q2:policy} in \Cref{lab: Sec Policy}.
\section{The Optimal BGC}
\label{lab: Sec BGC}
We first derive the equations to compute the bin retrieval cost in \Cref{sec:costFCN_aprox}.
Then, we present the solution to Problem \ref{Q1:1BGC} in \Cref{sec:slnP1}.
\subsection{Cost Function and Approximation}
\label{sec:costFCN_aprox}
To simplify the calculation, we set the cost of a robot raising or lowering its gripper by one cell height at $1$. We also neglect the cost of a robot to load and unload a bin with its gripper, and the cost is the same whether the gripper is carrying a bin or not regardless of the weight of the bin.
As a result, the cost is $l$ when a robot:
\begin{enumerate}
    \item releases its gripper from the top of the grid to a cell in layer $l$ (e.g., see \Cref{fig:intro_costs}g, where the target bin is located in layer $3$, and the robot releases the gripper to the target bin via the yellow down arrow at a cost of $3$),
    or,
    \item lifts its gripper from a cell in layer $l$ to the top of the grid (e.g., see \Cref{fig:intro_costs}b, where the bin $x$ is located in layer $2$, and the robot lifts the gripper with the bin to the top of the grid via the yellow up arrow at a cost of $2$).
\end{enumerate}

In what follows, we first derive the expressions of $C_{r,1}(l, H, h_c)$ and $C_{r,2}(l, H, h_c)$, respectively. Then, we sum the two results to arrive at the expression for $C_r(l, H, h_c)$.
Before starting the derivations, we specify that the surface layer is located in the layer $h_e+1$ of the grid. Recall that $h_e=H-h_c$ is the number of empty cells in each occupied stack.

The function $C_{r,1}(l, H, h_c)$ computes the cost of digging up the bins above the target bin and the target bin in the target stack. 
The cost of removing the topmost bin (in the surface layer $h_e+1$) from the stack includes releasing the gripper to reach the bin and raising the gripper to bring the bin to the top of the grid, which is $2(h_e+1)$.
Then we add the cost of removing the next topmost bins from the target stack, one at a time, until we get the target bin (in layer $l$) at the top of the grid.
By the summation formula of the arithmetic series, the cost of digging up bins from the target stack is computed by the following equation
\begin{equation}\label{eqn:BRC1}
    \begin{aligned}
        C_{r,1}(l, H, h_c)\equiv C_{r,1}(l, h_e)&=\underbrace{\underbrace{2(h_e+1)}_{\text{the $1^{st}$ topmost bin}}+\underbrace{2(h_e+2)}_{\text{the $2^{nd}$ topmost bin}}+\cdots+\underbrace{2(l)}_{\text{the target bin}}}_{\text{where } [h_e+1, h_e+2,\dots,l]\text{ is an arithmetic series}}\\
        &=2\underbrace{\frac{\Big((h_e+1)+l\Big)\Big(l-(h_e+1)+1\Big)}{2}}_{\text{by the summation formula of the arithmetic series}}\\
        &=l^2+l-h_e^2-h_e\textrm{,}
    \end{aligned}
\end{equation}
which is a monotonically increasing function with respect to $l$. 
    
The function $C_{r,2}(l, H, h_c)$ computes the cost of placing the dug-up bins above the target bin on top of the nearby stack(s). 
Since we consider a single bin request, 
every occupied stack, excluding the target stack, has $h_e+1$ empty cells (the $1$ is the temporary cell above a stack) to temporarily store the dug-up bins.
The costs of filling these empty cells from the bottom to the top with dug-up bins are $[2(h_e),2(h_e-1),\dots,2(0)]$, we call it the \emph{C sequence}. 
We choose to place the $l-h_e-1$ dug-up bins to fill the closest stack (Manhattan distance), then the next closest stack, etc.
Therefore, the cost of placing each of the dug-up bins is a sequence that is formed by cyclically repeating the elements of the C sequence, i.e., $[2(h_e),2(h_e-1),\dots,2(0),2(h_e),\dots]$, and we call it the \emph{D sequence}. 

\begin{figure}[ht]
    \centering
    \includegraphics[width=0.45\linewidth]{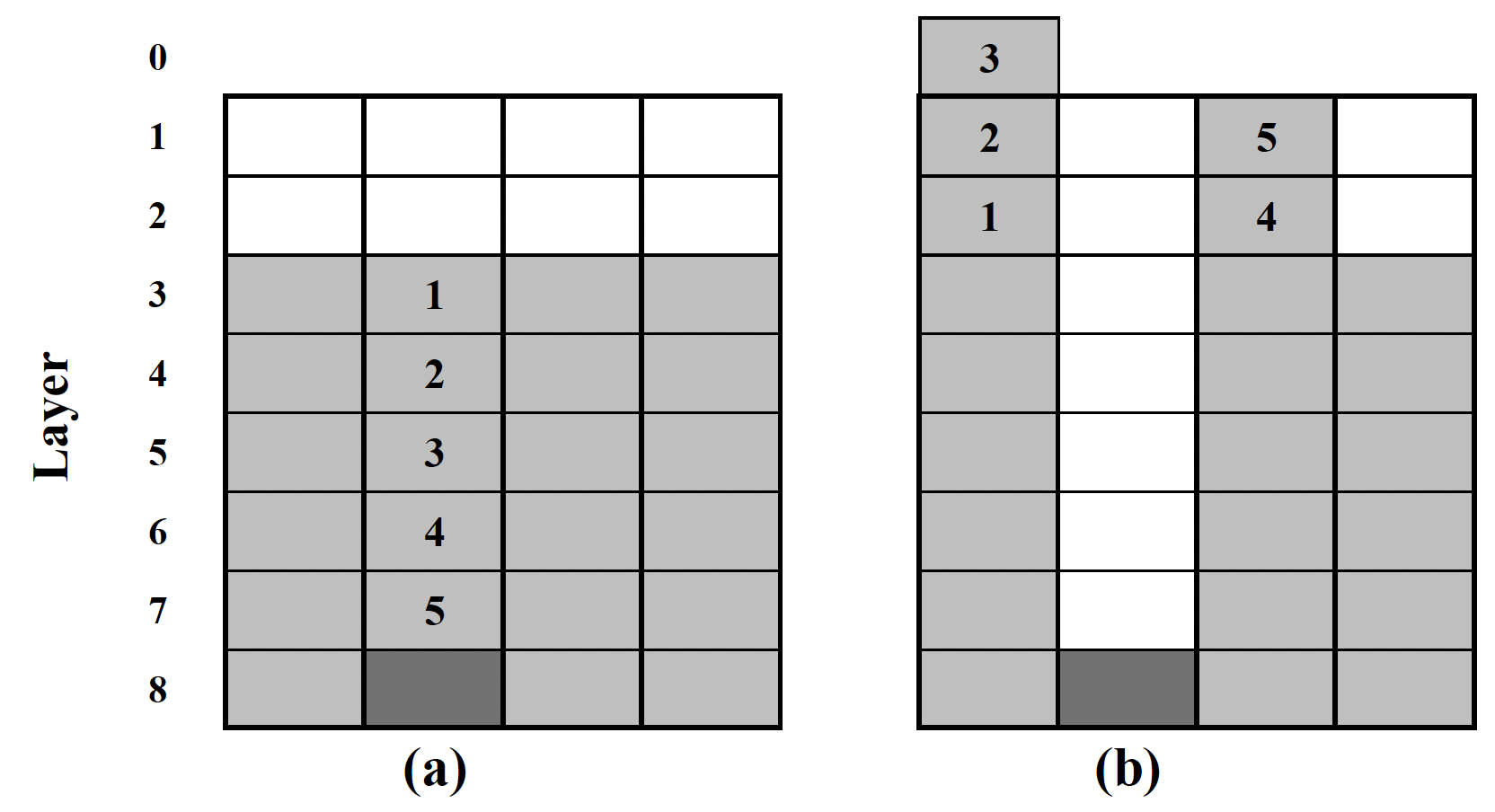}
    \caption{A system: $H=8$, $h_c=6$, $h_e=2$, target bin (dark gray rectangle) is located in $l=8$. Light gray rectangles represent bins and white rectangles are empty cells.}
    \label{fig:eg_Cr2}
\end{figure}
\begin{example}[Computation of $C_{r,2}(l, H, h_c)\equiv C_{r,2}(l, h_e)$]
    \Cref{fig:eg_Cr2}a presents the BGC before bin retrieval begins. 
    Accordingly, $C\ sequence$ is $[4,2,0]$, where $4=2(h_e)=2(2)$ and $D\ sequence$ is $[4,2,0,4,2,\dots]$.
    We need to temporarily store the $5$ bins, where $5=l-h_e-1=8-2-1$.
    \Cref{fig:eg_Cr2}b presents the arrangement of the bins after removing and temporarily storing all the bins above the target bin. Consequently, $C_{r,2}(l=8, h_e=2)=4+2+0+4+2=12$.
    \oprocend
\end{example}

The result of $C_{r,2}(l, h_e)$ is the sum of the first $l-h_e-1$ elements of the D sequence.
Since each element of the D sequence is a non-negative integer, $C_{r,2}(l, h_e)$ is monotonically non-decreasing with respect to $l$.
To quickly compute the value of $C_{r,2}(l, h_e)$, we store all possible results in a lookup table (see Appendix~\ref{APX:LUT}), and define a function $T(h_e,l)$ that computes the values stored in rows named $h_e$ and columns named $l$, i.e.,
\begin{equation}\label{eqn:BRC2}
    C_{r,2}(l, H, h_c)\equiv C_{r,2}(l, h_e)=T(h_e,l)\textrm{.}
\end{equation}

Finally, we compute the bin retrieval cost function by summing (\ref{eqn:BRC1}) and (\ref{eqn:BRC2}) as
\begin{equation}\label{eqn:BRC-complete}
    C_{r}(l,H,h_c)\equiv C_r(l,h_e)=l^2+l-h_e^2-h_e+T(h_e,l)\textrm{,}
\end{equation}
which is a monotonically increasing function with respect to $l$.
\subsection{The Optimal Bin Grid Configuration}
\label{sec:slnP1}
We find the optimal BGCs that solve the Problem \ref{Q1:1BGC} have the following property.
\begin{proposition}[Property of Optimal BGCs]
\label{prp_OPTBGC}
    In an optimal BGC, $\forall l_a,l_b\in\{h_e+1,\dots,H\}$, where $h_e+1$ is the surface layer of the BGC, and $l_a<l_b$, then the popularity of any bin stored in a cell in layer $a$ must be greater than or equal to the popularity of any bin stored in a cell in layer $b$, i.e., $p_{\mathbf{B}_{l_a,m}}\geq p_{\mathbf{B}_{l_b,m}}$.
\end{proposition}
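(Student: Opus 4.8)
The plan is to prove \Cref{prp_OPTBGC} by a rearrangement (exchange) argument, exploiting two facts already established in the excerpt: the per-layer retrieval cost $C_r(l,H,h_c)$ is \emph{strictly} increasing in the layer index $l$ (see \Cref{eqn:BRC-complete}), and the expected cost is additive over layers (see \Cref{eqn:BRC_expected}).

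First I would observe that the objective $E_1[C_r]=\sum_{l\in\llbracket H\rrbracket}C_r(l,H,\Omega(\mathbf{B}))\,\pi_l(\mathbf{B})$ depends on the arrangement only through the layer masses $\pi_l(\mathbf{B})=\sum_{m\in\llbracket M\rrbracket}p_{\mathbf{B}_{l,m}}$, and that the fill level $\Omega(\mathbf{B})=h_c$ is invariant under any permutation of bins among occupied cells. Hence, when comparing two configurations of the same $h_c$, the function $C_r(l,h_e)$ acts as a fixed, strictly increasing sequence of weights attached to the occupied layers $h_e+1,\dots,H$.

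Next I would argue by contradiction. Suppose $\mathbf{B}$ is optimal yet violates the claim: there exist layers $l_a<l_b$ in $\{h_e+1,\dots,H\}$ and occupied cells $(l_a,m_a)$, $(l_b,m_b)$ holding bins $i$ and $j$ with $p_i<p_j$. Let $\mathbf{B}'$ be the configuration obtained by swapping the contents of these two cells. Since both altered cells lie in the occupied block, $\mathbf{B}'$ retains the required block structure and keeps all bin IDs distinct, so $\mathbf{B}'\in\Phi$ with $\Omega(\mathbf{B}')=h_c$; the same weights $C_r(l,h_e)$ therefore apply to both. The swap changes only the two layer masses, by $+(p_j-p_i)$ in layer $l_a$ and $-(p_j-p_i)$ in layer $l_b$, yielding a change in expected cost of
\begin{equation}
\Delta = \bigl(C_r(l_a,h_e)-C_r(l_b,h_e)\bigr)\,(p_j-p_i).
\end{equation}
Because $C_r$ is strictly increasing in $l$ we have $C_r(l_a,h_e)<C_r(l_b,h_e)$, and $p_j-p_i>0$ by assumption, so $\Delta<0$. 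This contradicts the optimality of $\mathbf{B}$; hence no violating pair can exist, which is exactly the asserted ordering $p_{\mathbf{B}_{l_a,m}}\ge p_{\mathbf{B}_{l_b,m}}$.

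I expect the delicate points to be bookkeeping rather than conceptual: verifying that $\mathbf{B}'$ is again feasible (it lies in $\Phi$ and preserves $h_c$, so the weights are common to both sides of the comparison), and confirming that it is the \emph{strict} monotonicity of $C_r$ — which comes from the strictly increasing term $l^2+l-h_e^2-h_e$ in \Cref{eqn:BRC-complete}, not from the merely non-decreasing $T(h_e,l)$ — that forces $\Delta$ to be strictly negative. This strictness is precisely what upgrades the conclusion from ``some optimal BGC'' to ``every optimal BGC,'' with equality $p_{\mathbf{B}_{l_a,m}}=p_{\mathbf{B}_{l_b,m}}$ permitted only when the two bins happen to share the same popularity.
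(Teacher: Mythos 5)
Your proof is correct and follows essentially the same route as the paper's: a contradiction argument in which a violating pair of bins is exchanged and the resulting cost change factors as $(p_j-p_i)\bigl(C_r(l_a,h_e)-C_r(l_b,h_e)\bigr)<0$ by the monotonicity of $C_r$ in $l$. Your local pairwise-swap formulation is in fact slightly tighter than the paper's version, which compares against a fully sorted reference configuration and then appeals to iterating the argument over deeper layers.
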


\begin{proof}[By Contradiction.]
    Given the optimal BGC with the empty level $h_e$ for each occupied stack, for all $l\leq h_e$, we have $\pi_l(\mathbf{B})=0$.
    Thus, the objective function of Problem \ref{Q1:1BGC} is rewritten as
    \begin{equation}\label{eqn:E1_mod1}
        E_1[C_r] =  \sum_{l=h_e+1}^{H}C_r(l, h_e)\cdot \pi_l(\mathbf{B})\textrm{.}
    \end{equation}

    \begin{figure}[ht]
        \centering
        \includegraphics[width=0.55\linewidth]{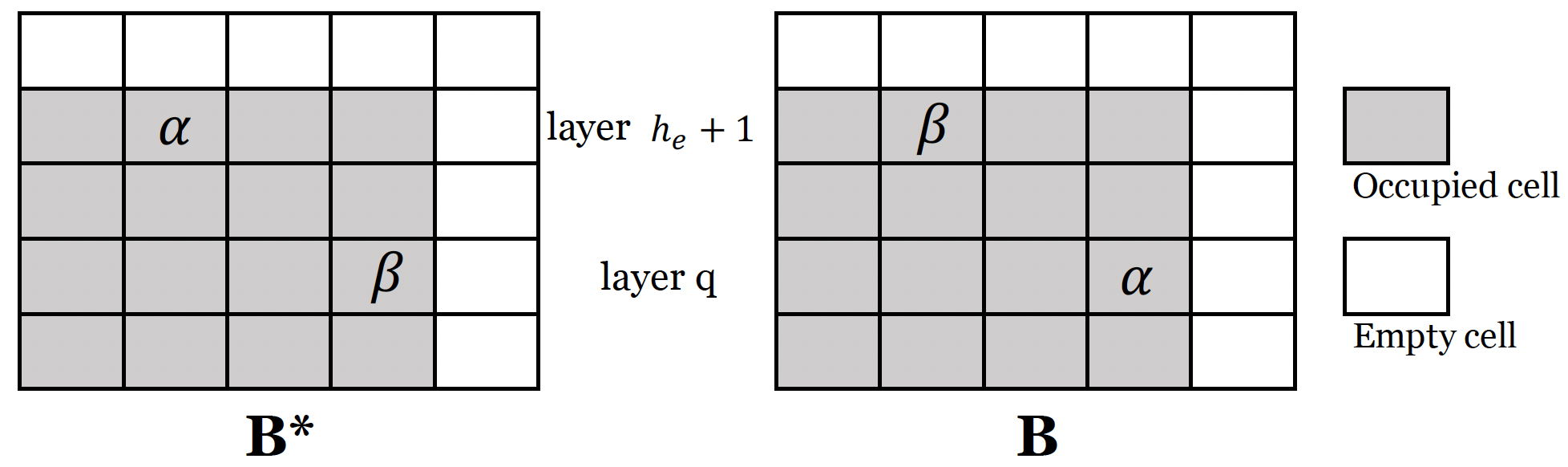}
        \caption{For the proof of the optimal BGC}
        \label{fig:absOPTBGC_example}
    \end{figure}
    
    For a $\mathbf{B}^*$ that satisfies \Cref{prp_OPTBGC}, we denote its expected cost as $E_1^*[C_r]$.
    In $\mathbf{B}^*$, layer $h_e+1$ is the surface layer that stores the $m_f$ most popular bins, and so on for deeper layers.
    
    Assume another $\mathbf{B}$ that achieves the minimum expected bin retrieval cost for a single bin request, but does not satisfy \Cref{prp_OPTBGC}. We denote its expected cost as $E_1[C_r]$.
    The layer $h_e+1$ and layer $q \in\{h_e+2,\dots, H\}$ of $\mathbf{B}$ are different from that of $\mathbf{B^*}$ (see \Cref{fig:absOPTBGC_example}) and given $p_{\alpha}>p_{\beta}$, i.e.,
    
    \begin{equation}
        \begin{cases}
        \begin{aligned}
            & &&\pi_l(\mathbf{B})\neq\pi_l(\mathbf{B}^*) \text{,   } \forall k\in \{h_e+1,q\}\textrm{,}\\
            & &&\pi_l(\mathbf{B})=\pi_l(\mathbf{B}^*) \text{,   } \forall k\in \{h_e+1,\dots, H\}/\{h_e+1,q\}\textrm{.}
        \end{aligned}
        \end{cases}
    \end{equation}
    
    We calculate the difference in expected costs for the two BGCs as follows:
    \begin{equation}
        \begin{aligned}
            & E_1^*[C_r]-E_1[C_r]    \\    
            & = \sum_{l=h_e+1}^{H}C_r(l, h_e)\cdot \pi_l(\mathbf{B}^*)-\sum_{l=h_e+1}^{H}C_r(l, h_e)\cdot \pi_l(\mathbf{B}) \\
            & = \sum_{l=h_e+1}^{H}C_r(l, h_e)\cdot \Big(\pi_l(\mathbf{B}^*)-\pi_l(\mathbf{B})\Big)   \\
            & = C_r(h_e+1, h_e)\cdot \Big(\pi_{h_e+1}(\mathbf{B}^*)-\pi_{h_e+1}(\mathbf{B})\Big) + C_r(q, h_e)\cdot \Big(\pi_q(\mathbf{B}^*)-\pi_q(\mathbf{B})\Big) \\
            & = C_r(h_e+1, h_e)\cdot(p_{\alpha}-p_{\beta})+C_r(q, h_e)\cdot(p_{\beta}-p_{\alpha}) \\
            & = \underbrace{\Big(p_{\alpha}-p_{\beta}\Big)}_{\text{[$>0$, since $p_{\alpha}>p_{\beta}$.]}}\cdot\underbrace{\Big(C_r(h_e+1, h_e)-C_r(q, h_e)\Big)}_{\text{[$<0$, since $C_r$ is an mono increasing equation.]}} \\
            & < 0\\
            &\Rightarrow E_1^*[C_r]<E_1[C_r]\textrm{,}
        \end{aligned}
    \end{equation}
    which conflicts with our assumption.
    Thus, in the optimal BGC, the most popular $m_f$ bins must be in the surface layer. We use this process to gradually verify bins in deeper layers.
\end{proof}

We define the expected bin retrieval cost for a single bin request corresponding to a given empty level as $h_e$ as $E_1[C_r]_{h_e}$.
With the property of optimal BGC and (\ref{eqn:E1_mod1}), we calculate the minimum expected bin retrieval cost $E_1[C_r]_{h_e}$ for a single bin request for all possible empty levels $\forall h_e\in \{0,\dots,H-\overline{H}\}$.
The optimal empty level $h_e^*$ is the $h_e$ that corresponds to the minimum $E_1[C_r]_{h_e}$.
Knowing the optimal empty level and the property of the optimal BGC, we solve Problem \ref{Q1:1BGC}.  
    
However, the optimal BGC is changed and is not optimal if the requested bin is not in the surface layer. 
As a result, the optimal BGC cannot be maintained while serving a series of bin requests at all times.
On the basis of the optimal BGC, we propose a set of more flexible BGCs that can be maintained while serving successive bin requests in the next section.
\section{Policy For Bin Rearrangement}\label{lab: Sec Policy}

In \Cref{sec:dynamic BGC}, we present a method that builds the set of equivalent optimal BGCs. 
In \Cref{LGCP}, we propose a policy that solves Problem \ref{Q2:policy}.

\subsection{The Equivalent Optimal BGCs}\label{sec:dynamic BGC}
Earlier in \Cref{lab:BGC problem}, we defined the set of equivalent optimal BGCs $\hat{\mathbb{B}}$ (\Cref{def:eq opt bgcs}) as a set that includes the equivalent classes of all optimal BGCs. 
By the defined equivalence relation (\Cref{rmk: eqc_r}), a BGC belonging to the set of equivalent optimal BGCs should have the same set of bins in each stack of an optimal BGC. 
Hence, we construct a such BGC $\hat{\mathbf{B}} \in \hat{\mathbb{B}}$ by deforming an optimal BGC $\mathbf{B}^*$ in the following steps:  
\begin{enumerate}
    \item Based on the $\mathbf{B}^*$, we set up \emph{layer groups} with each layer group including the bins in the layer, $l\in\{h_e+\textbf{1}, h_e+\textbf{2}, \dots, h_e+\mathbf{h_c}\}$, and we name the layer groups $1, 2, \dots, h_c$ correspondingly.
    \item Each stack in $\hat{\mathbf{B}}$ contains one bin from each layer group, and the bins in a stack can be arbitrarily arranged. We call such a stack a \emph{layer-complete} stack.
\end{enumerate}

\begin{lemma}[Equivalent Optimal BGC Verification]\label{lemma: verify BGC}
    If all the occupied stacks of a BGC are layer-complete, then the BGC belongs to the set of equivalent optimal BGCs. 
\end{lemma}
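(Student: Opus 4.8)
The plan is to prove the lemma by producing, for any layer-complete BGC $\hat{\mathbf{B}}$, an explicit optimal BGC $\mathbf{B}^*\in\mathbb{B}$ lying in the same equivalence class, so that $\hat{\mathbf{B}}\in[\mathbf{B}^*]\subset\hat{\mathbb{B}}$. Recall from the construction of the layer groups that group $j$ is exactly the set of $m_f$ bins occupying layer $h_e+j$ in an optimal BGC; since bin IDs are assigned in decreasing popularity, group $j$ is the fixed set of the $j$-th most popular block of $m_f$ bins. By the equivalence relation in \Cref{rmk: eqc_r}, membership in $[\mathbf{B}^*]$ is decided column by column, so it suffices to exhibit an optimal BGC whose per-stack bin sets match those of $\hat{\mathbf{B}}$.

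First I would read off, for each occupied stack $i\in\{1,\dots,m_f\}$ of $\hat{\mathbf{B}}$, its bin set $\{b_{i,1},\dots,b_{i,h_c}\}$, where by the layer-complete property $b_{i,j}$ is the unique bin of stack $i$ belonging to layer group $j$. I then define $\mathbf{B}^*$ by placing $b_{i,j}$ in layer $h_e+j$ of stack $i$ and leaving the top $h_e$ cells empty, which matches the empty level already present in $\hat{\mathbf{B}}$. By construction $\mathbf{B}^*$ is feasible (it has the block form of $\Phi$ with $m_f$ occupied stacks of height $h_c$) and shares the same column-wise bin sets as $\hat{\mathbf{B}}$, whence $\hat{\mathbf{B}}\sim\mathbf{B}^*$.

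The key step is to verify $\mathbf{B}^*\in\mathbb{B}$, i.e., that $\mathbf{B}^*$ satisfies the property of \Cref{prp_OPTBGC}. This rests on a counting argument: each of the $m_f$ stacks contributes exactly one bin from group $j$, these bins are pairwise distinct (every BGC entry is a unique ID), and the cardinality of group $j$ is exactly $m_f$; therefore the bins placed in layer $h_e+j$ across all stacks are precisely group $j$. Consequently layer $h_e+1$ holds the most popular $m_f$ bins, layer $h_e+2$ the next $m_f$, and so on, which is exactly the monotone-by-layer popularity ordering required by \Cref{prp_OPTBGC}. Together with the optimal empty level inherited from the BGC used to define the groups, this certifies $\mathbf{B}^*$ as optimal, and combining with $\hat{\mathbf{B}}\sim\mathbf{B}^*$ yields $\hat{\mathbf{B}}\in\hat{\mathbb{B}}$.

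I expect the main obstacle to be making the counting and distinctness argument airtight rather than any hard estimation: one must invoke the padding in \Cref{rmk:occupied stacks} to guarantee that the cardinality of group $j$ equals $m_f$ for every $j$, and use the distinctness of IDs to conclude that the per-stack selections exhaust each group with no repetition. A secondary point to state cleanly is that \Cref{prp_OPTBGC}, combined with the fixed optimal empty level, is sufficient and not merely necessary for optimality, so that exhibiting the layerwise grouping is by itself enough to place $\mathbf{B}^*$ in $\mathbb{B}$.
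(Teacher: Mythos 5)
Your proposal is correct and follows essentially the same route as the paper: the paper's proof also rearranges each layer-complete stack (sorting by popularity, equivalently by layer group) to exhibit an optimal BGC in the same equivalence class, and then invokes \Cref{rmk: eqc_r} and \Cref{def:eq opt bgcs}. Your version merely spells out the counting and distinctness details, and the sufficiency of the layerwise property from \Cref{prp_OPTBGC}, that the paper's one-line assertion ``the resulting $\overline{\mathbf{B}}$ is an optimal BGC'' leaves implicit.
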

\begin{proof}[Direct Proof.]
    Assume that a BGC $\mathbf{B}$ with each occupied stack is layer-complete. 
    We rearrange each stack by stacking the bins in ascending popularity order from bottom to top to form a new BGC $\overline{\mathbf{B}}$, where the resulting $\overline{\mathbf{B}}$ is an optimal BGC (i.e., $\overline{\mathbf{B}}\in\mathbb{B}$).
    According to the equivalence relation (\Cref{rmk: eqc_r}), $\mathbf{B}\in[\overline{\mathbf{B}}]$.
    Since $\overline{\mathbf{B}}\in\mathbb{B}$ and by the definition of the set of equivalent optimal BGCs (\Cref{def:eq opt bgcs}), $[\overline{\mathbf{B}}]\subset\hat{\mathbb{B}}$.
    Therefore, $\mathbf{B}\in\hat{\mathbb{B}}$.
\end{proof}

\begin{remark}[Repeated Popularity]
    If many bins have the same popularity, these bins may be suitable for multiple layers in an optimal BGC. Thus, such a bin could belong to multiple layer groups. 
    However, when verifying whether a stack is layer-complete or not, a bin should be classified into exactly one layer group. 
\end{remark}

\begin{example}[Repeated Popularity]

\begin{figure}[ht]
    \centering
    \includegraphics[width=0.5\linewidth]{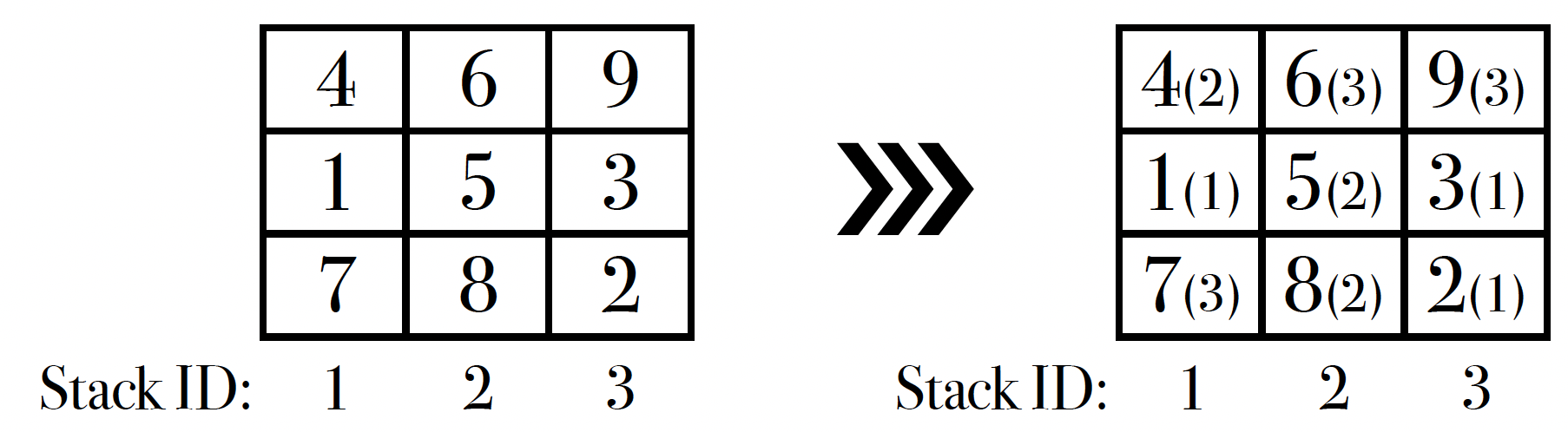}
    \caption{Identifying layer-completed stack}
    \label{fig:LGC-example}
\end{figure}

A grid consists of three stacks ($M=3$), has the height of three cells ($H=3$), and stores nine bins ($N=9$, with bin popularity: $p_1=0.4$, $p_2=0.3$, $p_3=0.06$, and $p_4=\cdots=p_9=0.04$). Bins 1,2 and 3 belong to layer group 1, and the rest of the bins belong to layer groups 2 and 3. 
\Cref{fig:LGC-example} (left) presents a BGC of a system, and the number in each rectangle is the bin ID.

We need to classify each bin into one layer group and verify whether each stack is layer-complete. Then, we can conclude whether the BGC is in the set of equivalent optimal BGCs or not. 

A non-unique result is given in \Cref{fig:LGC-example} (right), and the number in parentheses indicates the layer group into which the bin is classified. 
Only stack $1$ is layer-complete, while stacks $2$ and $3$ are not. 
Thus, the BGC is not in the set of equivalent optimal BGCs.
\oprocend
\end{example}

In the following section, we propose a policy that transforms such a BGC $\mathbf{B}\notin\hat{\mathbb{B}}$ into a BGC $\mathbf{\hat{B}}\in\hat{\mathbb{B}}$ while processing a series of bin requests.

\subsection{The Layer Complete Policy (LCP)}
\label{LGCP}
To transform a BGC that is not in the set of equivalent optimal BGCs into a BGC that is in the set of equivalent optimal BGCs while processing a series of bin requests, we work on selecting appropriate storage stacks for the bins that returned from the workstations. 
Our policy $\Gamma(\cdot)$ takes the current BGC and bin request as input, includes five possible cases when selecting the storage stack, and outputs the BGC after returning the requested bin back to the grid. 

\begin{figure*}[ht]
    \centering
    \includegraphics[width=0.97\textwidth]{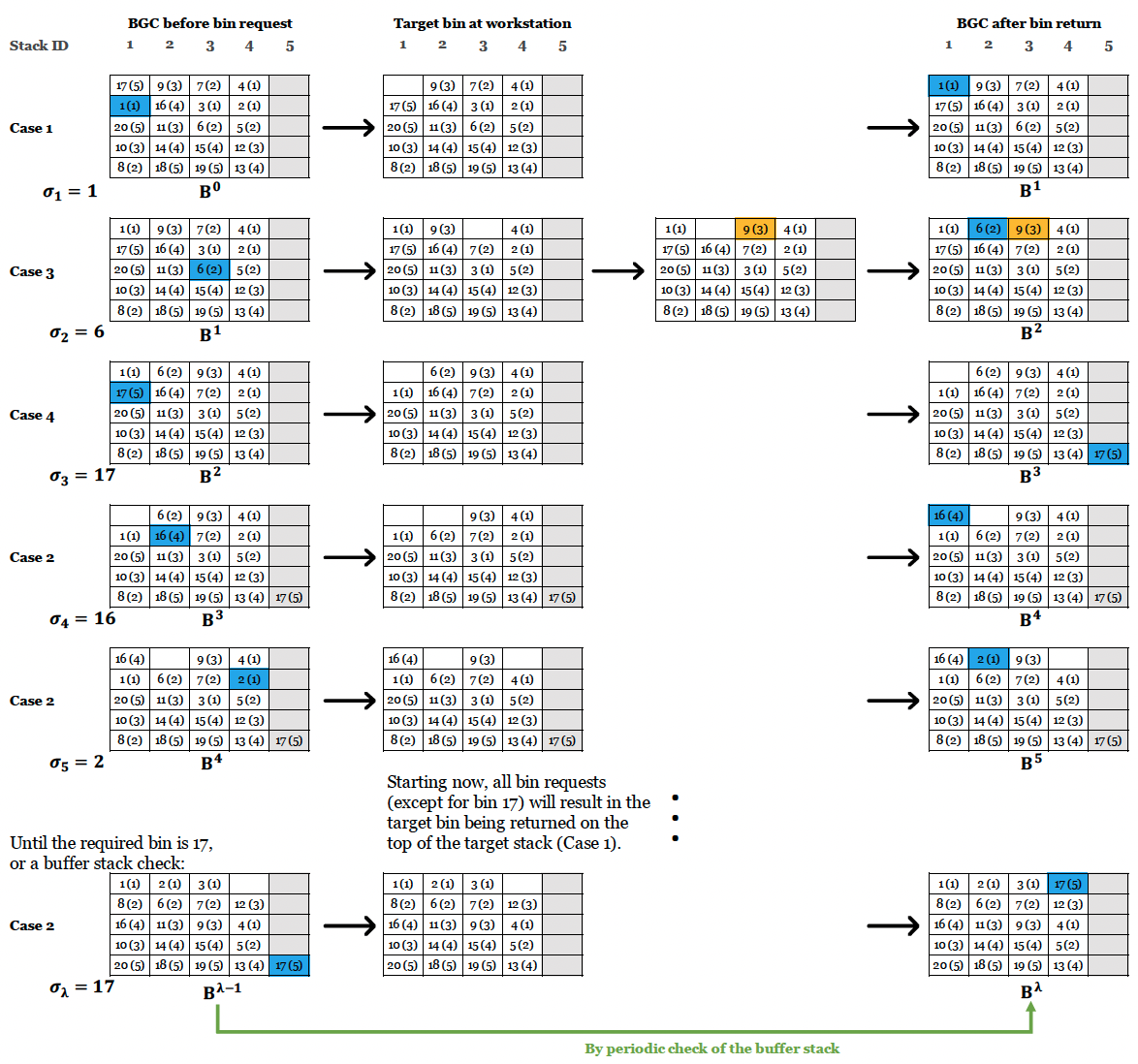}
    \caption{A grid consists of four occupied stacks (white) and a buffer stack (gray), has a height of five cells, and stores twenty bins in total.
    In each small rectangle, the number outside parentheses is the bin ID, and the number inside parentheses is the layer group of the bin. 
    Each figure in the left column is the BGC (together with the request bin are the inputs of $\Gamma(\cdot)$) before a bin retrieval starts, and the blue rectangle in the BGC is the target bin.
    The figures in the middle are BGCs after removing the target bin from the grid. 
    Each figure in the right column is the BGC after returning the target bin to the grid (i.e., the output of $\Gamma(\cdot)$), and the orange rectangle is the swap bin (in case 3). 
    Black arrows indicate the inputs and outputs of the policy. 
    Green arrows indicate the change due to the periodic check of the buffer stack. 
    Step by step, the proposed policy transforms a random BGC $\mathbf{B^0}$ (on top left) into a BGC $\mathbf{B^{\lambda}}$ (on bottom right) that is in the equivalent optimal BGCs.}
    \label{fig:example-policy}
\end{figure*}

In the policy, we need to set aside an empty stack in the grid and name the stack the \emph{buffer stack}.
The cases are checked sequentially and described as follows (see \Cref{fig:example-policy} for examples):
\begin{description}
    \item[\textbf{Case 1:}] If the target bin is the only bin from layer group $x$ in the target stack, then the target bin is placed on top of the target stack.
    \item[\textbf{Case 2:}] 
    If there is another stack $m$ that has no bin from layer group $x$ and has empty cells, then the target bin is placed on top of stack $m$.
    \item[\textbf{Case 3:}] 
    If there is another stack $m$ that no bin from layer group $x$ and has more than one bin (call them the \emph{\textbf{swap bins}}) from layer group $y$, and the target stack has no bin from layer group $y$, then (1) the uppermost swap bin is taken out from stack $m$ and placed on top of the target stack, after, (2) the target bin is placed on top of stack $m$.
    \item[\textbf{Case 4:}] 
    If an empty cell is available in the buffer stack, then the target bin is placed on top of the buffer stack. 
    \item[\textbf{Case 5:}] 
    The target bin is placed on top of the occupied stack that has the maximum number of empty cells.
\end{description}
Additionally, the buffer stack is periodically checked, where robots restore the bins from the buffer stack to occupied stacks with empty cells that need a bin from a matching layer group.

Before assessing the effectiveness of the LCP, we clarify two distances:
\begin{definition}[Distance between Two Stacks]\label{lab: def_d_stacks}
    Given a stack $m$, define a multiset $U_m$ that contains the layer groups of the bins in the stack. The distance between two stacks with stack ID $i$ and $j$ is defined as $D_s(U_i,U_j)=|U_i-U_j|+|U_j-U_i|$.
\end{definition}

\begin{example}[Distance between Two Stacks]
    Given stack $i$ has $U_i=\{1,1,1,2,5\}$ and the layer-complete stack has $U_{*}=\{1,2,3,4,5\}$. 
    Then, the distance between the stack $i$ and the layer-complete stack is $4=|\{1,1\}|+|\{3,4\}|$, where $|\{1,1\}|$ represents the number of abundant bins with repeat layer group(s) and $|\{3,4\}|$ represents the total number of bins for the absent layer group(s) in stack $i$.
    \oprocend
\end{example}

\begin{definition}[Distance between Two BGCs]\label{lab: def_d_BGCs}
    Given two BGCs $\mathbf{B}$ and $\overline{\mathbf{B}}$, the distance between them is defined as the sum of the distances of the corresponding stacks of $\mathbf{B}$ and $\overline{\mathbf{B}}$, i.e., with a slight abuse of notation, we denote $U_{m,\mathbf{B}}$ as the multiset that contains the layer groups of the bins in stack $m$ of $\mathbf{B}$, then $D(\mathbf{B},\overline{\mathbf{B}})=\sum_{m\in \llbracket M\rrbracket}D_s(U_{m,\mathbf{B}}, U_{m,\overline{\mathbf{B}}})$. 
\end{definition}

Given the policy $\Gamma(\cdot)$ (recall (\ref{eqn: policy})), an initial BGC $\mathbf{B}^0$, a series of bin requests $\boldsymbol{\sigma}$, and the corresponding equivalent optimal BGC $\hat{\mathbf{B}}$, we define \emph{the sequence of distances}, $d_k=D(B^k,\hat{\mathbf{B}})$, representing the distance between the \emph{resulting BGC} after returning the $k$th requested bin and the corresponding equivalent optimal BGC.

\begin{proposition}[The Effectiveness of LCP]
    Given any initial BGC and a series of bin requests, under the layer complete policy (LCP), the sequence of distances $d_k$ is monotonically decreasing with $k$.
\end{proposition}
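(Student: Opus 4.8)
The plan is to show that every single bin request changes the distance by a nonpositive amount, so that $d_{k+1}\le d_k$; since Cases~1, 4, and 5 leave the distance unchanged, the ``monotonically decreasing'' claim should be read in the weak (non-increasing) sense. First I would observe that, because the distance is computed from the multisets of \emph{layer groups} rather than bin IDs, and because every occupied stack of any optimal (hence any equivalent optimal) BGC is layer-complete with multiset $\{1,\dots,h_c\}$ while the buffer stack is empty, the quantity $D(\mathbf{B}^k,\hat{\mathbf{B}})$ does not depend on which $\hat{\mathbf{B}}\in\hat{\mathbb{B}}$ is chosen. This lets me write $d_k=\sum_m D_s(U_{m,\mathbf{B}^k},\{1,\dots,h_c\})$ over the occupied stacks plus the number of bins sitting in the buffer, and to note that for each occupied stack $D_s(U_m,\{1,\dots,h_c\})$ equals the number of \emph{abundant} bins (copies beyond the first in any repeated layer group) plus the number of \emph{absent} layer groups.

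Next I would isolate the effect on $D_s$ of the two elementary operations the policy performs, namely removing or adding a single bin of some layer group $g$ in a stack. Adding a bin of group $g$ decreases $D_s$ by $1$ if the stack currently has no bin of group $g$ (it fills an absent group) and increases $D_s$ by $1$ otherwise (it creates abundance); symmetrically, removing a bin of group $g$ decreases $D_s$ by $1$ if the stack has at least two copies of $g$ and increases it by $1$ if it has exactly one. Since processing one request restores all dug-up bins except the target (and, in Case~3, additionally relocates one swap bin), the net change $d_{k+1}-d_k$ is just the sum of these $\pm 1$ contributions over the one or two relocated bins.

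With this lemma I would run the case analysis, using the sequential order of the cases to control the signs. The key fact is that reaching any of Cases~2--5 means Case~1 failed, so the target stack holds at least two bins of the target group $x$; hence removing the target bin always reduces abundance by $1$. In Case~1 the bin returns to its own stack (net $0$); in Case~2 the destination has no group $x$, giving $-1$ on removal and $-1$ on insertion (net $-2$); in Case~3 the target stack loses $x$ and gains its missing $y$ while stack $m$ loses a surplus $y$ and gains its missing $x$ (net $-4$); in Case~4 the $-1$ from removal is offset by $+1$ for the bin entering the buffer (net $0$); and in Case~5 the $-1$ from removal is offset by a $+1$ insertion, because the failure of Case~2 forces every stack with a free cell---in particular the chosen maximum-empty-cell stack---to already contain group $x$ (net $0$). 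Finally, the periodic buffer check only moves a bin from the buffer into an occupied stack missing that group, contributing $-2$, so it too cannot increase $d_k$.

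The main obstacle I anticipate is not any individual computation but the careful bookkeeping that justifies the sign in each case purely from the \emph{failure} of the earlier cases---especially Case~5, where I must argue that the maximum-empty-cell stack necessarily already contains group $x$ (otherwise Case~2 would have fired), so that the insertion cannot fill an absent group and the net change is exactly $0$ rather than negative. I would also take care to handle the edge situation in which the maximum-empty-cell stack coincides with the target stack, where removal and reinsertion cancel and the distance is trivially preserved, and to confirm that the distance is measured between two \emph{stable} configurations $\mathbf{B}^k$ and $\mathbf{B}^{k+1}$ (after bin restoring), so that the transient digging and restoring never enter the accounting.
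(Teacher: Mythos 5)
Your proposal is correct and follows essentially the same route as the paper: decompose each request into elementary removal/insertion operations on the per-stack layer-group multisets, then verify case by case (plus the periodic buffer check) that the net change is non-positive, reading ``monotonically decreasing'' in the weak sense. The only divergence is a bookkeeping convention for the buffer stack---you count a bin parked there toward the distance (so Case~4 nets $0$ and the periodic check nets $-2$), whereas the paper treats that insertion as free (Case~4 nets $-1$, periodic check $\leq 0$)---and your explicit justification that the failure of Cases~1--2 forces the removal in Cases~2--5 to be a duplicate removal and the Case~5 insertion to be a duplicate insertion is a detail the paper asserts without argument.
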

\begin{proof}[Direct Proof.]
    First, we assess each possible change that may happen in a stack by evaluating the impact on the distance between the affected stack and a layer-complete stack:
    \begin{itemize}
        \item [\textit{Removal of a duplicate layer bin (\textbf{RmD}):}] if an occupied stack contains more than one bin from a layer group and one such bin is removed from the stack, the distance decreases by $1$;
        \item [\textit{Insertion of an absence layer bin (\textbf{InA}):}] if an occupied stack does not contain a bin from a layer group and such a bin is placed on top of the stack, the distance decreases by $1$;
        \item [\textit{Insertion of a duplicate layer bin (\textbf{InD}):}] if an occupied stack contains one (or more) bin(s) from a layer group and another such bin is placed on top of the stack, the distance increases by $1$;
        \item [\textit{Free insertion 1 (\textbf{FIn1}):}] if an occupied stack contains only one bin from a layer group, and the bin is taken out and then returned to the same stack, then the distance remains the current value; and
        \item [\textit{Free insertion 2 (\textbf{FIn2}):}] if a bin is placed on top of the buffer stack, the distance remains unchanged.
    \end{itemize}

    Next, we derive for each policy case the corresponding change in the distance between the resulting BGC and an equivalent optimal BGC:
    \begin{itemize}
        \item Case 1: $0$, due to one \textbf{FIn1};
        \item Case 2: $-2$, due to one \textbf{RmD} and one \textbf{InA}; 
        \item Case 3: $-4$, due to two \textbf{RmD}s and two \textbf{InA}s;
        \item Case 4: $-1$, due to one \textbf{RmD} and one \textbf{FIn2};
        \item Case 5: $0$, due to one \textbf{RmD} and one \textbf{InD}; and
        \item Periodic check of buffer stack: $\leq0$, by no or a few \textbf{InA}s.
    \end{itemize}
    
    Since the change in distance is non-positive for each case of the policy and the periodic check, the proposed policy will never increase the distance between the resulting BGC and the equivalent optimal BGC.
    In other words, the policy gradually adjusts the bins in occupied stacks that are not layer-complete, decreasing or maintaining the distance between any stack and a layer-complete stack, and thus the distance between the resulting BGC and an equivalent optimal BGC. 
\end{proof}

To reduce the distance between $\overline{\mathbf{B}}\notin\hat{\mathbb{B}}$ and $\hat{\mathbf{B}}\in\hat{\mathbb{B}}$, the policy must trigger one of \textbf{Cases} 2, 3, or 4. 
In \textbf{Case} 2, one bin request rearranges one bin that needs to be reorganized (calling such a bin \emph{out-of-place} bin).
In \textbf{Case} 3, one bin request rearranges two out-of-place bins. 
In \textbf{Case} 4, one out-of-place bin is moved to the buffer stack, and we assume that the periodic check of the buffer stack moves the bin to its right place, thus one bin request adjusts one out-of-place bin.
Therefore, each out-of-place bin must be requested at least once to make it \emph{in-place}.
Note that the series of bin requests $\boldsymbol{\sigma}$ depends on the demand distributions of the storage items, i.e., the bin popularity $p_n$.

\begin{claim}\label{lab: claim1}
    Suppose all out-of-place bins have a popularity greater than zero, i.e., $p_n>0$ where $n$ is the ID of an out-of-place bin. 
    Then, the LCP can transform any initial BGC $\overline{\mathbf{B}}\notin\hat{\mathbb{B}}$ into an equivalent optimal BGC $\hat{\mathbf{B}}$, and the expected number of bin requests to complete the transformation is finite.
    Thus, the LCP guarantees that any initial BGC converges to an equivalent optimal BGC.
\end{claim}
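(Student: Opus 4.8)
The plan is to treat the integer-valued distance $d_k = D(\mathbf{B}^k,\hat{\mathbf{B}})$ as a Lyapunov potential and to combine the monotonicity already established in the Effectiveness of LCP proposition with a waiting-time estimate driven by the positivity of the popularities. First I would record three facts. By \Cref{lab: def_d_stacks} and \Cref{lab: def_d_BGCs} the distance is a sum of multiset cardinalities, hence a non-negative integer, and the initial value $d_0$ is finite (bounded by $2N$, since each of the $m_f$ occupied stacks contributes at most $2h_c$). By the Effectiveness of LCP proposition, $d_k$ is non-increasing in $k$. Finally, because the distance compares multisets of \emph{layer groups} and every layer-complete stack has the identical layer-group multiset $\{1,\dots,h_c\}$, \Cref{lemma: verify BGC} yields that $d_k=0$ holds exactly when every occupied stack of $\mathbf{B}^k$ is layer-complete, i.e. precisely when $\mathbf{B}^k\in\hat{\mathbb{B}}$. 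These facts reduce the claim to showing that the non-increasing integer sequence $d_k$ actually reaches $0$, which, since each change is by an integer amount, requires at most $d_0$ strict decrements.

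The heart of the argument is to show that while $\mathbf{B}^k\notin\hat{\mathbb{B}}$ some request is guaranteed to strictly decrease $d_k$. I would argue that if $d_k>0$ then some occupied stack holds two bins of a common layer group $x$; because each of the $h_c$ layer groups contains exactly $m_f$ bins while there are exactly $m_f$ occupied stacks, a counting argument forces another occupied stack to be \emph{missing} group $x$. Requesting the duplicate (out-of-place) bin of group $x$ then fails the condition of \textbf{Case} 1 and satisfies the condition of \textbf{Case} 2, \textbf{Case} 3, or \textbf{Case} 4, each of which the Effectiveness of LCP proposition shows reduces $d_k$ by at least $1$. Since every out-of-place bin has popularity $p_n>0$, the popularity-driven request model makes the waiting time until that bin is requested geometric with finite mean $1/p_n$; consequently the expected number of requests until the next strict decrement is at most $1/p_{\min}$, where $p_{\min}$ is the smallest positive popularity among all bins that are out-of-place at some step.

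Assembling the pieces, at most $d_0$ strict decrements suffice and each incurs expected cost at most $1/p_{\min}$, so by linearity of expectation the expected number of requests to reach $d_k=0$ is finite (at most $d_0/p_{\min}$); this proves both that the LCP drives any $\overline{\mathbf{B}}\notin\hat{\mathbb{B}}$ into $\hat{\mathbb{B}}$ and that it does so in finite expected time. I expect the main obstacle to be the drift step: rigorously verifying that requesting a duplicate bin actually triggers one of \textbf{Cases} 2, 3, or 4 rather than the distance-neutral \textbf{Case} 1 or \textbf{Case} 5. This requires combining the counting argument (a duplicated layer group in one stack always induces a matching absent layer group in another) with the invariants the policy maintains, namely the uniform fill level $h_c$ on occupied stacks and the reserved buffer stack that keeps \textbf{Case} 4 available as a fallback, so as to rule out the degenerate possibility that the distance stalls at a positive value while only \textbf{Cases} 1 and 5 ever fire.
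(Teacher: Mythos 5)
Your argument is correct in outline but follows a genuinely different route from the paper. The paper frames the transformation as a coupon collector problem with non-uniform probabilities: it observes that each out-of-place bin must be requested at least once, assigns the request probabilities $p'_i>0$ to the out-of-place bins (plus one aggregate coupon for the in-place bins when their total popularity is positive), and invokes the closed-form expectation $E[C_j]=\sum_{q}(-1)^{y-q}\sum_{|J|=q}1/(1-P_J)$, which is finite because every $P_J<1$. You instead run a Lyapunov-type drift argument on the integer potential $d_k=D(\mathbf{B}^k,\hat{\mathbf{B}})$: monotonicity from the Effectiveness of LCP proposition, the identification $d_k=0\iff\mathbf{B}^k\in\hat{\mathbb{B}}$ via \Cref{lemma: verify BGC}, at most $d_0\leq 2N$ strict decrements, and an expected waiting time of at most $1/p_{\min}$ per decrement, giving the more elementary (if cruder) bound $d_0/p_{\min}$ in place of the CCP formula. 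Your version makes the convergence mechanism more transparent and quantifies progress per request; the paper's version gives a sharper expectation and avoids having to argue about which policy case fires on a given request.

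That said, the step you flag as the "main obstacle" is a real gap, and it is the crux: you need that whenever $d_k>0$ there is a positive-popularity bin whose request is \emph{guaranteed} to trigger one of \textbf{Cases} 2, 3, or 4 rather than the distance-neutral \textbf{Case} 5. Your counting argument does produce a stack missing group $x$, but \textbf{Case} 2 additionally requires that stack to have empty cells (which can fail when $h_e=0$ or fill levels have drifted), \textbf{Case} 3 requires a compatible swap in the opposite direction, and \textbf{Case} 4 requires room in the buffer stack, which can be exhausted between periodic checks. None of these is verified, so the possibility that $d_k$ stalls at a positive value is not excluded. To be fair, the paper's proof carries essentially the same unproven assumption -- it treats "every out-of-place bin is requested once" as sufficient for completion, which presupposes that such a request never resolves via \textbf{Case} 5 -- so your proposal matches the paper's level of rigor on this point while being honest about where the weakness lies. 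Closing the gap in either framework requires an explicit invariant guaranteeing that at least one of \textbf{Cases} 2--4 is always available for an out-of-place target bin.
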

\begin{proof}[Direct Proof]
    An initial BGC $\overline{\mathbf{B}}\notin\hat{\mathbb{B}}$ has a total number of $x$ out-of-place bins.
    We define a vector $\mathbf{p}$ that stores the popularity of out-of-place bins, where $\mathbf{p}(i)$ represents the $i$th entry of $\mathbf{p}$ and $\mathbf{p}(i)>0$.
    We need to retrieve each of the out-of-place bins at least once to transform a $\overline{\mathbf{B}}$ into a $\hat{\mathbf{B}}\in\hat{\mathbb{B}}$, which is a coupon collector problem (CCP) with different probabilities. 

    Depending on the probability summation of all in-place bins, we have two possible CCP setups:
    \begin{description}
        \item [Setup 1:] if $1-\sum_{i=1}^{x}\mathbf{p}(i)>0$, then we assume there are $x+1$ types of coupons, and that each day a collector randomly gets a coupon with probability $p{'}_i$ corresponding to the $i$th coupon ($1 \leq i \leq x+1$), where 
        \begin{equation}
            p{'}_i=\begin{cases}
                    \begin{aligned}
                    &  &&\mathbf{p}(i) &&& \text{if $1 \leq i \leq x$}\\
                    &  &&1-\sum_{j=1}^{x}\mathbf{p}(j) &&& \text{if $i = x+1$}
                    \end{aligned}
                    \end{cases} 
                    \textrm{.}
        \end{equation} 
        \item[Setup 2:] if $1-\sum_{i=1}^{x}\mathbf{p}(i)=0$, then we assume there are $x$ types of coupons, and that each day a collector randomly gets a coupon with probability $p{'}_i$ corresponding to the $i$th coupon ($1 \leq i \leq x$), where $p{'}_i=\mathbf{p}(i)$.
    \end{description}
    
    Note that in both setups, $p{'}_i>0$ for all $i$.
    Then, we submit the values in the equation that computes the expected time ($C_j$, the number of bin requests in our problem) for a full collection as
    \begin{equation}
        E[C_j]=\sum_{q=0}^{y=x \text{ or } x-1}(-1)^{y-q}\sum_{|J|=q}\frac{1}{1-P_J} \text{\ \ \ with \ \ \ } P_J=\sum_{j\in J}p{'}_j\textrm{.}
    \end{equation}
    Since every $p{'}_i>0$, we have all $0<P_J<1$, which result in $1/(1-P_J)<\infty$. Thus, $E[C_j]$ is finite, which is the expected time to collect all the $x+1$ or $x$ coupons.
    In \textbf{Setup 1}, our goal is to collect $x$ types of coupons (i.e., the out-of-place bins) in the $x+1$ types of coupons, the expected number of bin requests ensuring that each of the out-of-place bins is requested at least once is theoretically smaller than $E[C_j]$.
    Therefore, with the given condition, the excepted number of bin requests to transform $\overline{\mathbf{B}}$ into $\hat{\mathbf{B}}$ is finite, where the number is smaller than (with Setup 1) or equal to (with Setup 2) $E[C_j]$.   
\end{proof}

The above proof also shows that the requirement that all out-of-place bins have positive popularity is also necessary.
If we have any out-of-place bin(s) that has zero popularity, then one of a few $p{'}_i$ is(are) equal to zero.
Therefore, there must exist one (or more) $P_J=1$ and result in $1/(1-P_J)\to\infty$. Thus, $E[C_j]$ is infinite.
Therefore, the LCP does not guarantee the transformation from any BGC into an equivalent optimal BGC in a finite number of bin requests.

In fact, \emph{popular} bins with higher polarities are requested most of the time, and the rest of the bins (\emph{unpopular}) are requested occasionally. The $80/20$ rule - the most popular $20\%$ of bins are retrieved for $80\%$ of order fulfillment - is widely applied in warehouse management \citep{magestoreweb}. 
If we focus on the percentage of popular bins out of all bins ($\varepsilon$) for most of the order fulfillment, we only need to ensure that a BGC can be transformed into a \emph{quasi-equivalent optimal BGC} (see Approximation \ref{lab:quasi_optbgc}).
\begin{approximation}[Quasi-Equivalent Optimal BGC]\label{lab:quasi_optbgc}
    A quasi-equivalent optimal BGC $\tilde{\mathbf{B}}$ has each occupied stack containing one bin from the layer groups $1,\dots,\lceil h_c\cdot\varepsilon\rceil$.
\end{approximation}
Consequently, each popular bin has positive popularity and the sum of popularities of the unpopular bins is greater than $0$.

\begin{claim}
    For any given initial BGC $\overline{\mathbf{B}}\notin\hat{\mathbb{B}}$, under the LCP, the expected number of bin requests to transform $\overline{\mathbf{B}}$ into a quasi-equivalent optimal BGC $\tilde{\mathbf{B}}$ is finite.
\end{claim}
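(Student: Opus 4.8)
The plan is to reduce the statement to the same coupon-collector argument used in the proof of Claim~\ref{lab: claim1}, exploiting the fact that a quasi-equivalent optimal BGC only constrains the placement of the \emph{popular} bins. First I would identify the relevant set of coupons. For the quasi-equivalent target $\tilde{\mathbf{B}}$ (Approximation~\ref{lab:quasi_optbgc}), a stack is required to contain exactly one bin from each of the popular layer groups $1,\dots,\lceil h_c\cdot\varepsilon\rceil$, while the deeper (unpopular) layer groups are left unconstrained. Consequently the bins that must be relocated to reach $\tilde{\mathbf{B}}$ --- call them the \emph{popular out-of-place bins} --- are all drawn from the popular layer groups, and by the consequence of Approximation~\ref{lab:quasi_optbgc} each of them has strictly positive popularity. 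Let $x'$ denote their number, which is finite (at most $N$).

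Second, I would argue that each popular out-of-place bin must be requested at least once, and that requesting each of them once suffices, so that the transformation time is governed by a CCP on these $x'$ coupons. The ``at least once'' direction is inherited from the discussion preceding Claim~\ref{lab: claim1}; for the sufficiency direction I would restrict the distance of Definitions~\ref{lab: def_d_stacks} and~\ref{lab: def_d_BGCs} to the popular layer groups and re-run the case analysis of the Effectiveness-of-LCP proposition, checking that each of Cases~1--5 and the periodic buffer check is non-increasing in this restricted distance, and that the restricted distance reaches zero exactly when every stack is popular-layer-complete. The key point is that, because the unpopular layer groups are now free, no request ever needs to displace an already-placed popular bin merely to accommodate an unpopular one.

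Third, with the coupon set fixed I would instantiate the CCP exactly as in Claim~\ref{lab: claim1}. Since the unpopular bins carry strictly positive total popularity, the catch-all mass $1-\sum_{i=1}^{x'}p'_i$ is positive, so we are in \textbf{Setup 1} with $x'+1$ coupon types and every $p'_i>0$. Then every subset probability satisfies $0<P_J<1$, hence $1/(1-P_J)<\infty$ and $E[C_j]$ is finite; by the Setup~1 reasoning the expected number of requests needed to collect just the $x'$ popular coupons is bounded above by this finite $E[C_j]$, which gives the claimed finiteness.

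The main obstacle I anticipate is the sufficiency step of the second paragraph: unlike in Claim~\ref{lab: claim1}, where the target is a genuine equivalence class and every stack must be fully layer-complete, here I must make the restricted (``quasi'') distance precise and verify that the LCP cannot stall the popular bins short of completion once each has been requested --- in particular that the net-zero Case~5 and the deferred placement through the buffer in Case~4 together with the periodic check do not trap a popular bin away from its required slot. Making this bookkeeping rigorous, rather than the coupon-collector estimate itself, is where the real work lies.
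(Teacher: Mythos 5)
Your proposal matches the paper's own proof, which is given in a single line: set $\mathbf{p}$ to the popularities of the popular out-of-place bins and apply \textbf{Setup 1} of the coupon-collector argument from Claim~\ref{lab: claim1}, exactly as in your third paragraph. The ``sufficiency'' bookkeeping you flag as the real work is not carried out in the paper either---it is inherited implicitly from the case analysis preceding Claim~\ref{lab: claim1}---so that concern points to a gap in the paper's own exposition rather than a divergence in your approach.
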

The proof is similar to the proof of Claim \ref{lab: claim1} by setting $\mathbf{p}$ to include the popularity of all popular out-of-place bins and applying setup 1.

After transforming the BGC into a quasi-equivalent optimal BGC, the LCP ensures that every new resulting BGC is still a quasi-equivalent optimal BGC.
In other words, the set of all quasi-equivalent optimal BGCs should be positively invariant under LCP.
Consequently, we modify the equivalent class of a BGC (\Cref{rmk: eqc_r}) to the \emph{quasi-equivalent class of a BGC} as follows.
\begin{remark}[quasi-equivalent class of a BGC]
    In the set of all feasible BGCs $\Phi$, we define the equivalence relation $R_{\varepsilon}$ as follows:
    given $\tilde{\mathbf{B}}\in\Phi$ and $\mathbf{B}\in\Phi$, then $\mathbf{B}R_{\varepsilon}\tilde{\mathbf{B}}$ if and only if for each column $i\in\{1,\ldots,m_f\}$ the $[h_e+1,\dots,h_e+\lceil h_c\cdot\varepsilon\rceil]$ entries of $\tilde{\mathbf{B}}$ exist in the corresponding column of $\mathbf{B}$ (with possibly different ordering).
    We let $[\tilde{\mathbf{B}}]_{R_{\varepsilon}}=\{\mathbf{B}\in\Phi|\mathbf{B}R_{\varepsilon}\tilde{\mathbf{B}}\}$ denote the quasi-equivalence class to which $\tilde{\mathbf{B}}$ belongs.
\end{remark}

In summary, by retrieving bins from and returning bins to the grid, the proposed policy
(i) reduces or maintains the distance between the resulting BGC after each bin request and an equivalent optimal BGC $\hat{\mathbf{B}}$ and (ii) ensures that any BGC can be transformed into a quasi-equivalent optimal BGC $\tilde{\mathbf{B}}$ in a finite number of bin requests.
After the first time the system achieves $\hat{\mathbf{B}}$ or $\tilde{\mathbf{B}}$, the BGC persists in $[\hat{\mathbf{B}}]$ or $[\tilde{\mathbf{B}}]_{R_{\varepsilon}}$ until there is a change in the distribution of bin popularity. 
Subsequently, the proposed policy will, again corresponding to the updated bin popularity, transform the current BGC into an updated quasi-equivalent optimal BGC (guaranteed) or/and an updated equivalent optimal BGC (unguaranteed).
\section{Results and Discussions}
\label{lab: Sec result}
In this section, we evaluate and compare the proposed policy with two existing methods.
\Cref{Implementation Details} introduces the 3D discrete event simulation environment built for policy validation and evaluation. 
We consider scenarios starting with three initial BGCs (one optimal and two non-optimal) and
evaluate three global objectives:
the digging workload in \Cref{Digging Workload},
the bin retrieval time in \Cref{Bin Retrieval Time},
and the robot working time in \Cref{Robot Runtime}.

\subsection{Implementation Details}
\label{Implementation Details}
We built simulation models in FlexSim (version 23.0.3).
The parameters of the storage bin and the robot are shown in \Cref{System Parameters}. 
All simulations are carried out using the system parameterized as shown in \Cref{System Setup}. 
The grid uses a rectangular footprint, and each stack has the same number of cells in our model. 
We use red-floor stacks to represent workstations located on the perimeter of the bottom layer of the system (see \Cref{fig:topview}). 
The bin requests are generated by the distribution presented in \Cref{fig:binP}, which depends on the popularity of the bins. 
To exclude the influence of different warehouse management systems (WMS), we do not streamline bin requests (recall Assumption \ref{lab: asp bin request}).

\begin{table}[ht]
\caption{System Parameters} \label{System Parameters}
\small
\centering
\begin{tabularx}{\columnwidth}{@{}XXXXXX@{}}
\toprule
\multicolumn{2}{c}{Storage Bin} & \multicolumn{4}{c}{Robot} \\
\cmidrule(r){1-2} \cmidrule(l){3-6}
Length           & $0.65\ m$   & Top Speed     & $3.1\ m/s$     & Load     &$1.2\ s$\\
Width            & $0.45\ m$    & Acceleration  & $0.8\ m/s^2$   & Unload   &$1\ s$\\
Height           & $0.33\ m$    & Lift Speed    & $1.6\ m/s$     & Turn     &$1\ s$\\
\bottomrule
\end{tabularx}
\end{table}

\begin{table}[ht]
\caption{System Setup} \label{System Setup}
\small
\centering
\begin{tabularx}{\columnwidth}{@{}XXXXXXX@{}}
\toprule
\multicolumn{2}{c}{Grid Size (cell)} & \multicolumn{2}{c}{Number of Components} & \multicolumn{3}{c}{Orders} \\
\cmidrule(r){1-2} \cmidrule(l){3-4} \cmidrule(l){5-7}
Length  & $24$  & Robots        & $12$      & \multicolumn{2}{l}{Bin Request Rate}    & $5\ requests/min$ \\
Width   & $12$  & Workstations  & $6$       & \multicolumn{2}{l}{Bin Processing Time} &  \\
Height  & $10$  & Bins          & $2730$    & \multicolumn{2}{l}{at Workstation}      & $30\ s/order$ \\
\bottomrule
\end{tabularx}
\end{table}

\begin{figure}[ht]
    \begin{subfigure}{0.6\linewidth}
         \centering
         \includegraphics[width=\textwidth]{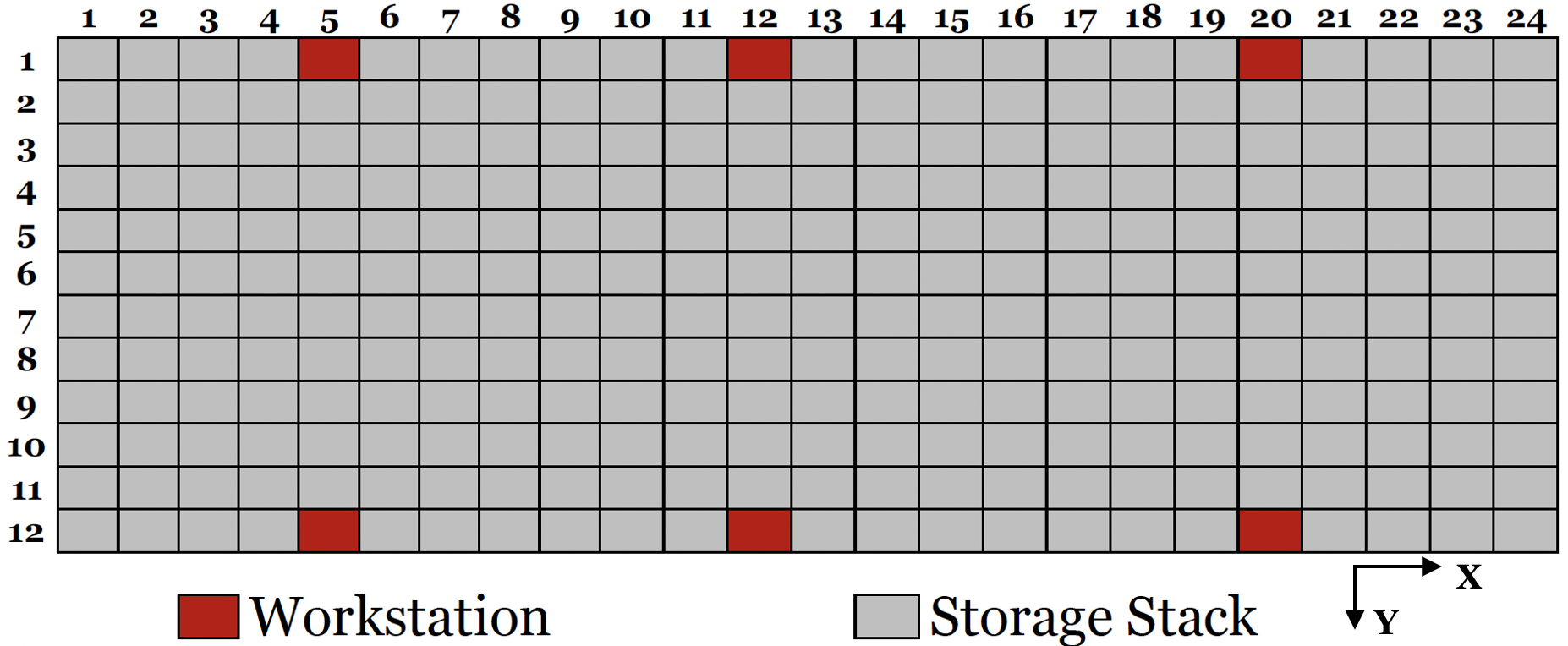}
         \caption{Top view of the grid}
         \label{fig:topview}
     \end{subfigure}
     \hfill
     \begin{subfigure}{0.38\linewidth}
         \centering
         \includegraphics[width=\textwidth]{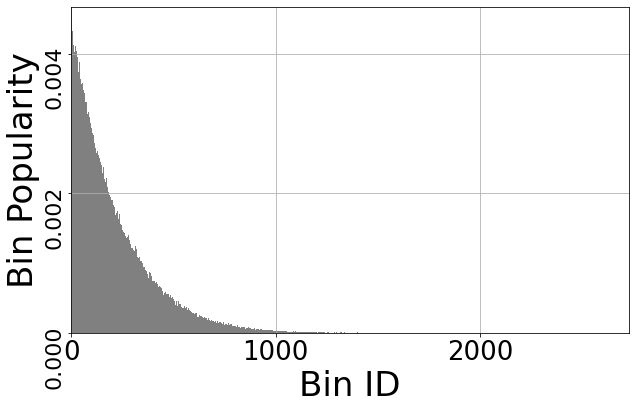}
         \caption{Bin popularity}
         \label{fig:binP}
     \end{subfigure}
    \caption{Simulation environment setups}
    \label{fig: system setup}
\end{figure}

\begin{figure*}[ht]
     \centering
     \begin{subfigure}{0.325\textwidth}
         \centering
         \includegraphics[width=\textwidth]{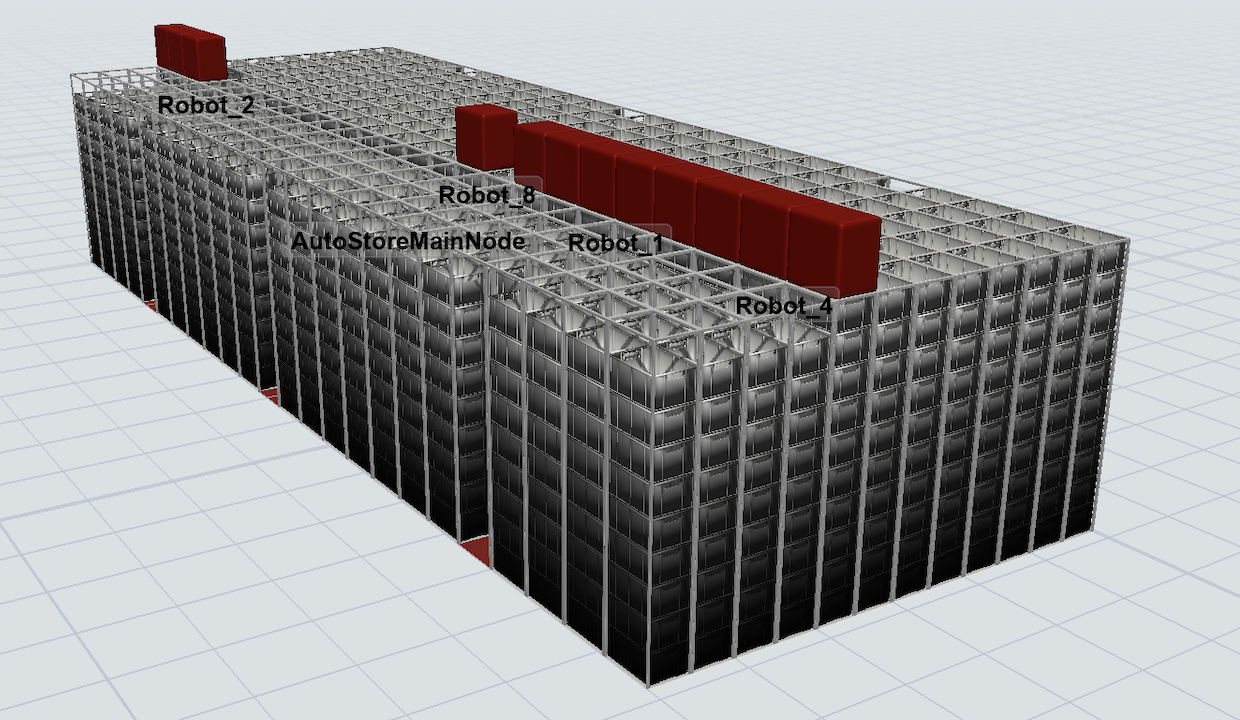}
         \caption{$0\%$ randomization from the optimal BGC (BGC/0)}
         \label{fig:0init}
     \end{subfigure}
     \hfill
     \begin{subfigure}{0.325\textwidth}
         \centering
         \includegraphics[width=\textwidth]{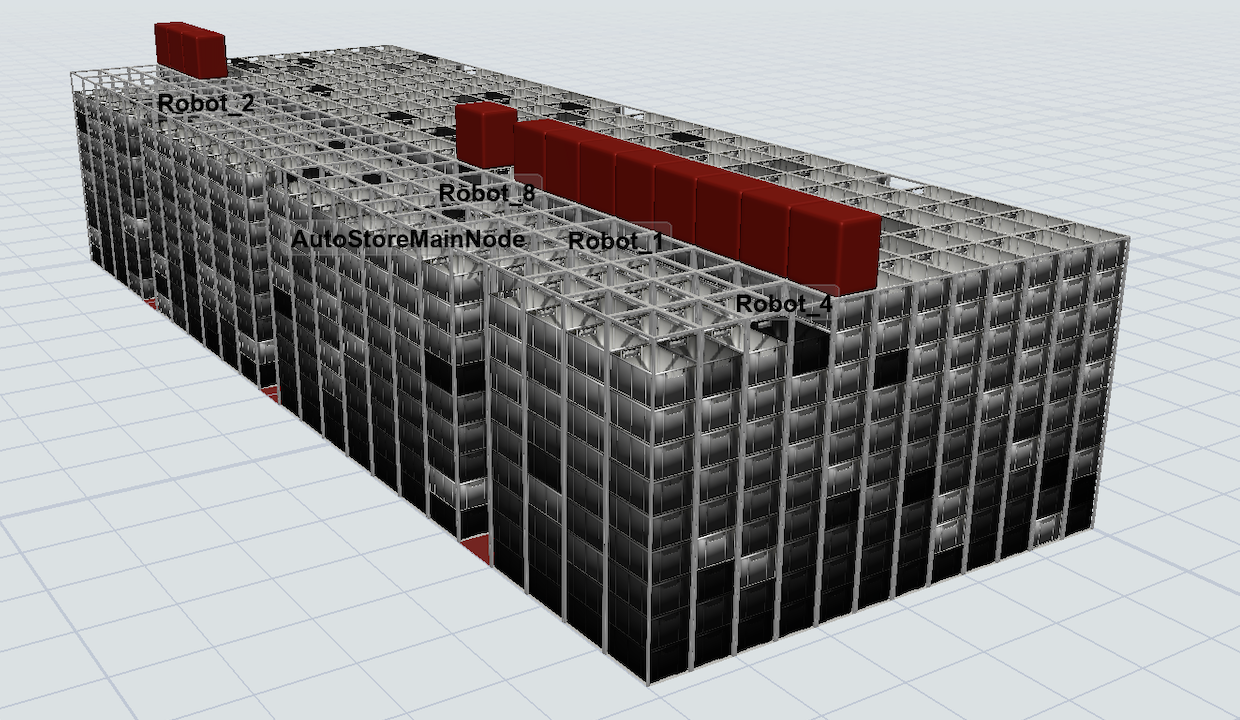}
         \caption{$40\%$ randomization from the optimal BGC (BGC/40)}
         \label{fig:40init}
     \end{subfigure}
     \hfill
     \begin{subfigure}{0.325\textwidth}
         \centering
         \includegraphics[width=\textwidth]{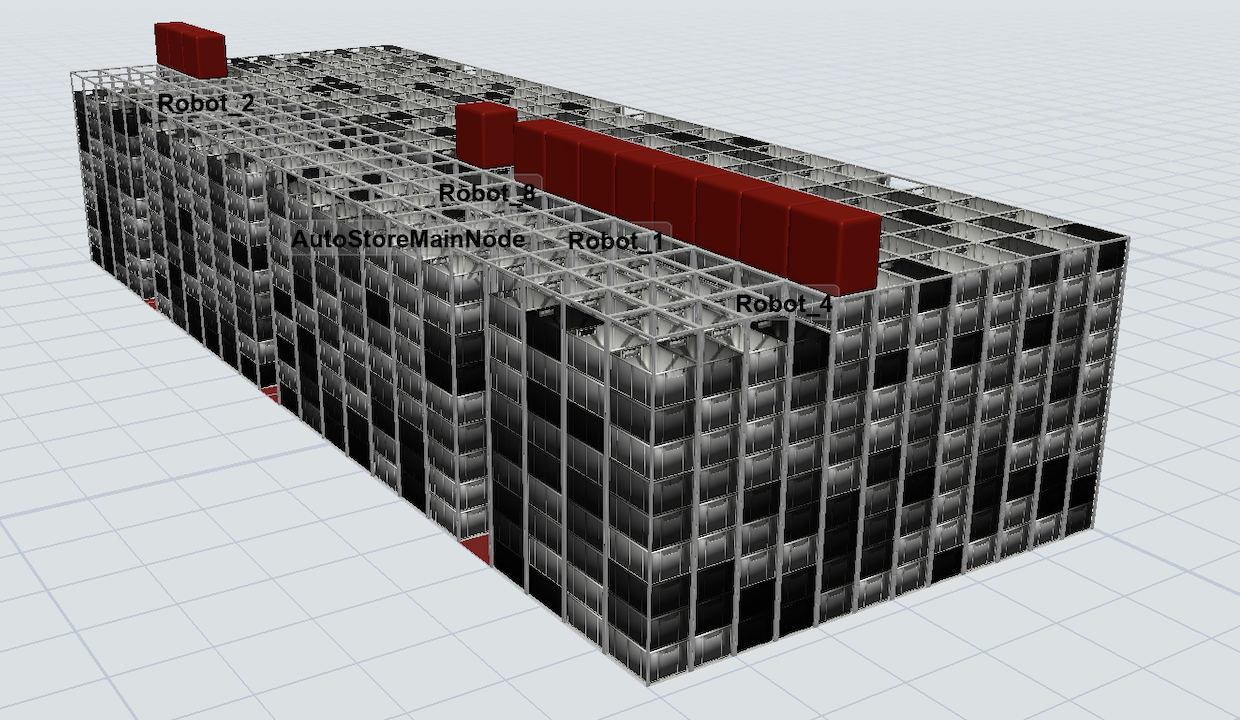}
         \caption{$100\%$ randomization from the optimal BGC (BGC/100)}
         \label{fig:100init}
     \end{subfigure}
    \caption{Initial states with different randomization from the optimal BGC}
    \label{fig: 3D Sim Example}
\end{figure*}

We generate different initial BGCs for the simulation by taking an optimal BGC and swapping selected pairs of bins. 
The pairs are selected uniformly at random and without replacement.
We denote the initial BGC by the percentage of bins that have been swapped in an optimal BGC. 
For example, in a system that stores $100$ bins, an initial $40\%$ swap corresponds to swapping $20$ pairs of bins ($40$ bins in total).
We call this the \emph{randomization from optimal BGC}, with $0\%$ corresponding to the optimal BGC and $100\%$ corresponding to each bin involved in a swap.
\Cref{fig: 3D Sim Example} presents three initial BGCs corresponding to the three scenarios discussed in this section.

We integrate the system with the proposed policy and the delayed and immediate policies (i.e., the shared storage policy coupled with random storage stacks using delayed and immediate reshuffling methods in \citealt{zou2018operating}).
In each combination of scenarios and policies, we simulate the model for $100$ hours and use the result for the following analysis.

This paper focuses on the effect of digging depth of required bins on bin retrieval time.
The location of the workstations is not our object of study,
and in our simplified simulation model:
\begin{enumerate}
    \item from the start point to the destination, robots first run in X direction and then in Y direction and do not perform collision avoidance between robots and bins,
    \item when a new task arrives, the first available robot in the queue accepts the task but not necessarily the nearest one, and
    \item a robot releases the bin at the workstation immediately when it arrives at the top of the workstation.
\end{enumerate} 

When evaluating the performance of the system, specifically the digging time to retrieve the target bin, we do not count the time consumption of robots moving from the dwelling point to the task location (e.g., stacks or workstations).
These simplifications allow us to run simulations on larger systems.  
In fact, our simulation results have shown that they do not affect the general observed trends (see the second half of \Cref{Digging Workload}).

Note that deadlock can occur in AS/RS where two or more entities (e.g., robots) become mutually blocked and unable to proceed further, resulting in a halt or significant delay in system operations.
This problem has been well discussed in AVS/RS \citep{he2009deadlock, roy2013blocking,carlo2012sequencing}.
In RCS/RS, blocked stacks and bins (recall \Cref{lab: the process}) may cause deadlocks in the system. By prioritizing the jobs in order (return $>$ reshuffle $>$ retrieval), we assign free robots to the higher-rank jobs first to prevent the system from deadlock.
With the setup and scenarios in our simulations, deadlock does not occur in our model (the result is not specifically discussed in the paper).
\subsection{Digging Workload}\label{Digging Workload}
In each of the three scenarios (BGC/0, BGC/40, and BGC/100), we compare the digging workload with the three policies.
\Cref{fig: digging depth} (resp. \Cref{fig: nbins above}) presents the distribution of the digging depth of (resp. the number of bins above) the target bins.
\begin{figure*}[ht]
    \centering
    \begin{subfigure}{0.31\textwidth}
        \centering
        \includegraphics[width=\textwidth]{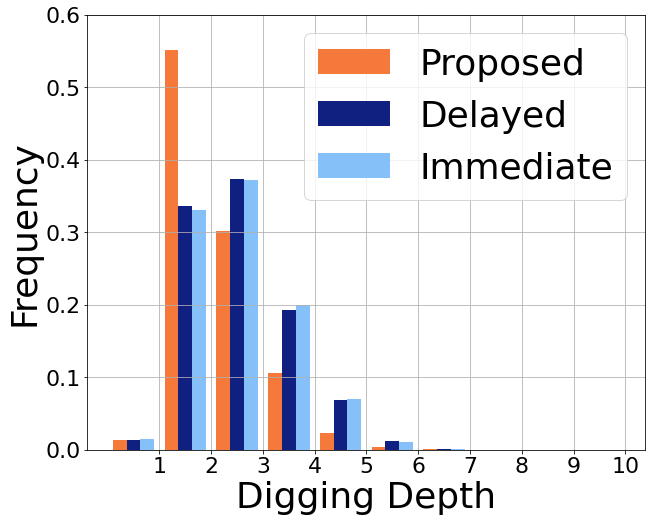}
        \caption{BGC/0}
        \label{fig:0 digdepth}
    \end{subfigure}
    \hfill
    \begin{subfigure}{0.31\textwidth}
        \centering
        \includegraphics[width=\textwidth]{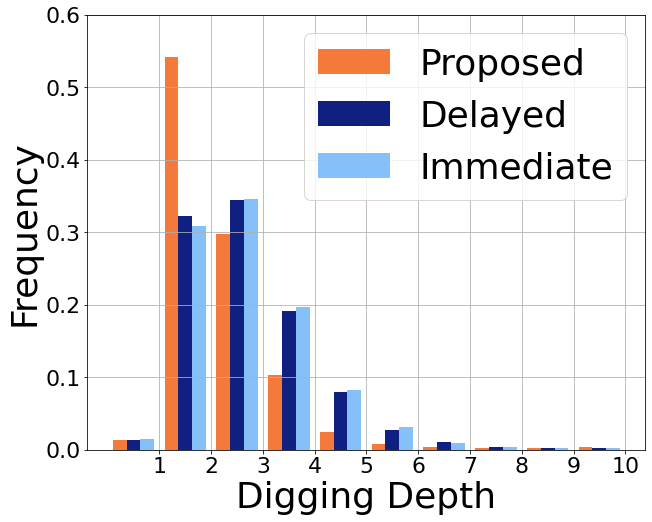}
        \caption{BGC/40}
        \label{fig:40 digdepth}
    \end{subfigure}
    \hfill
    \begin{subfigure}{0.31\textwidth}
        \centering
        \includegraphics[width=\textwidth]{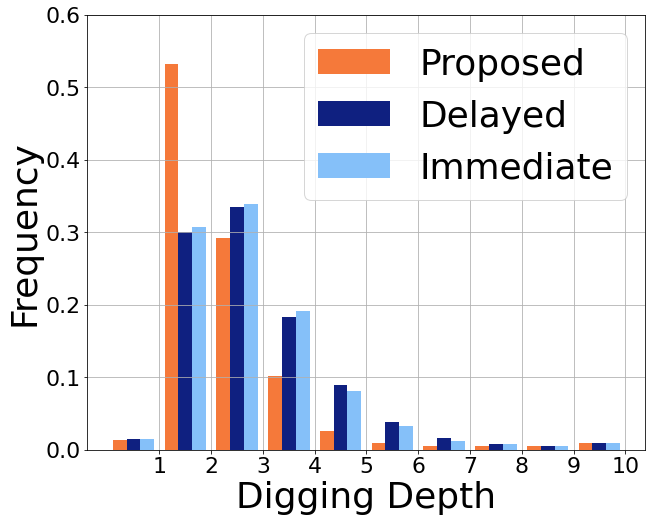}
        \caption{BGC/100}
        \label{fig:100 digdepth}
    \end{subfigure}
    \caption{Comparisons of the digging depth of the target bins.}
    \label{fig: digging depth}
\end{figure*}
\begin{figure*}[ht]
    \centering
    \begin{subfigure}{0.31\textwidth}
        \centering
        \includegraphics[width=\textwidth]{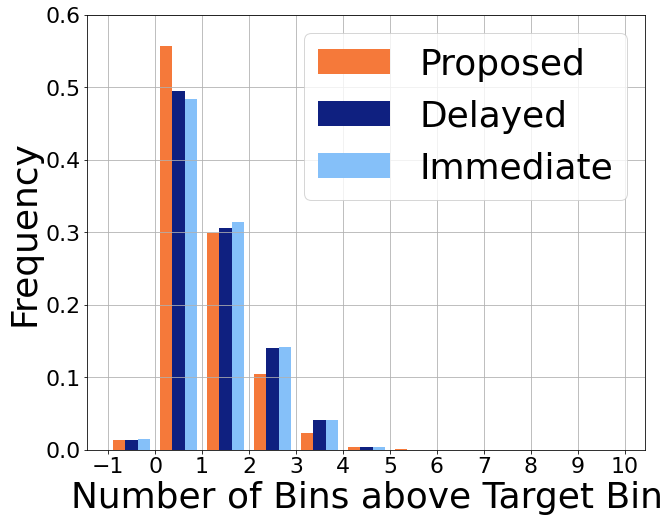}
        \caption{BGC/0}
        \label{fig:0 abovebins}
    \end{subfigure}
    \hfill
    \begin{subfigure}{0.31\textwidth}
        \centering
        \includegraphics[width=\textwidth]{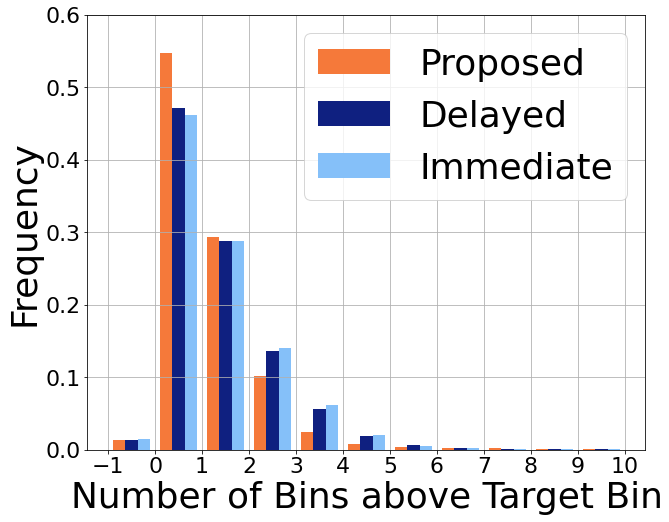}
        \caption{BGC/40}
        \label{fig:40 abovebins}
    \end{subfigure}
    \hfill
    \begin{subfigure}{0.31\textwidth}
        \centering
        \includegraphics[width=\textwidth]{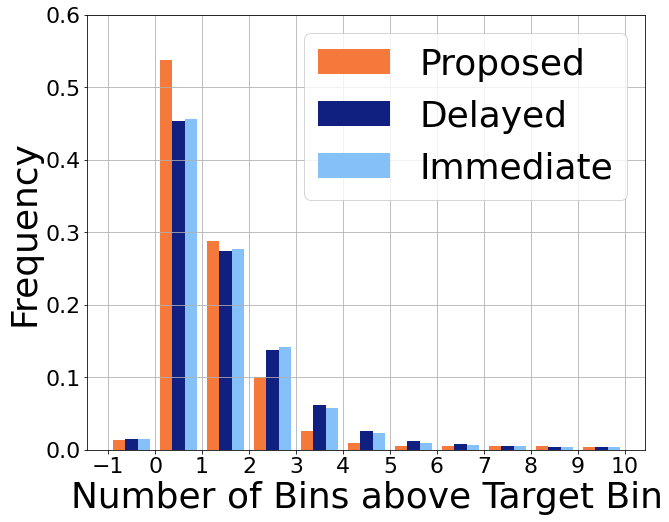}
        \caption{BGC/100}
        \label{fig:100 abovebins}
    \end{subfigure}
     
    \caption{Comparisons of the number of bins above the target bins.}
    \label{fig: nbins above}
\end{figure*}

The first group of bars in each histogram encapsulates the target bins that have already been placed at the workstation when requested. Thus, these bin requests do not trigger any robot task.
Comparing the target bins that are from the top layer (resp. with zero bin above the target bin), we find that the portion in the proposed policy is at least 65\% (resp. 16\%) greater than in delayed and immediate policies. 
As should be, the proposed policy has a smaller number of target bins dug up from the deeper layers (resp. smaller number of bins above the target bin) than the other two policies.

By applying the proposed policy, more than $50\%$ of the requested bins are retrieved from the surface layer, which means that robots do less digging work to retrieve the target bins.
To support this finding and to verify that the impact of the simplifications made in the model (mentioned in \Cref{Implementation Details}) is negligible, we partition the bin retrieval time into four parts:
(1) \emph{waiting time} is the time of a task waiting for a robot to be assigned and the time of a blocked target stack or bin waiting to be freed;
(2) \emph{delivery1 time} is the time of a robot moving from the dwelling point to the target stack; 
(3) \emph{digging time} is the time from the start of removing the first bin out of the target stack to bring the target bin to the top of the target stack; and 
(4) \emph{delivery2 time} is the time of a robot moving from the target stack to the workstation. 
The results illustrate that the proposed policy has
\begin{enumerate}
    \item a significant advantage in reducing the digging time (see \Cref{fig: partition dig} );
    \item almost no impact on the waiting time (all plots are overlapped, omitted); and
    \item almost no impact on the delivery motion, i.e., delivery1 time and delivery2 time (all plots are overlapped, omitted).
\end{enumerate}

\begin{figure}[ht]
    \centering
    \includegraphics[width=0.5\linewidth]{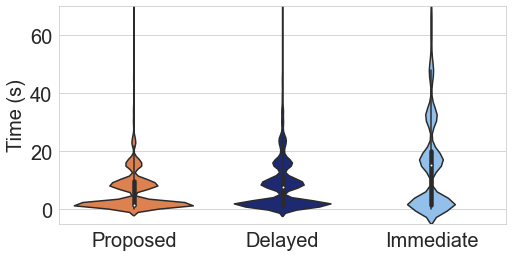}
    \caption{Dig Time Distribution (BGC/40)}
    \label{fig: partition dig}
\end{figure}

In summary, the proposed policy effectively reduces the digging workload by keeping the target bin closer to the top of the grid.
\subsection{Bin Retrieval Time}\label{Bin Retrieval Time}

\begin{figure}[ht]
    \centering
    \includegraphics[width=0.55\linewidth]{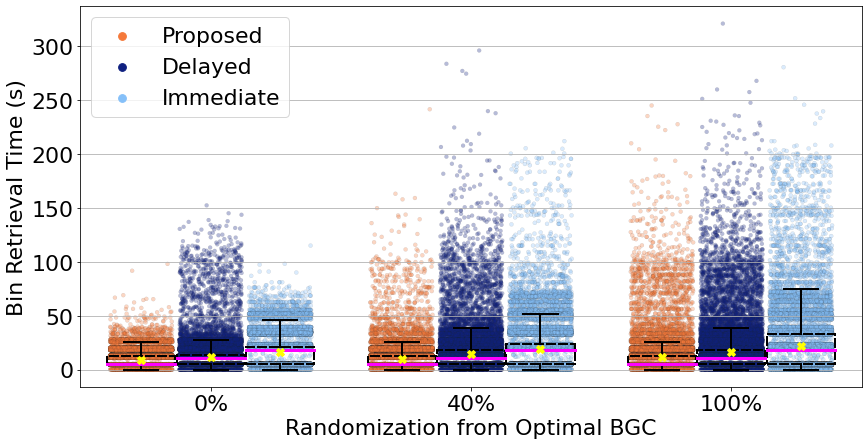}        
    \caption{The distribution of the bin retrieval time - All Bin Requests}
    \label{fig:BRT_F}
\end{figure}

\Cref{fig:BRT_F} presents the bin retrieval time of all bin requests (approximately $30000$ bin requests) in the 100-hour simulation for each of the three policies in three scenarios.
We observe that when the system uses the proposed policy, the bin retrieval times are more concentrated at lower values in all three scenarios compared to the other two policies.
We superimpose each set of data points with a box plot, where the central bright solid line indicates the median, the bottom and top edges of the box indicate the $25$th and $75$th percentiles, and the whiskers extend to the most extreme data points ($1.5$ IQR).

\begin{figure*}[ht]
    \centering
    \includegraphics[width=\textwidth]{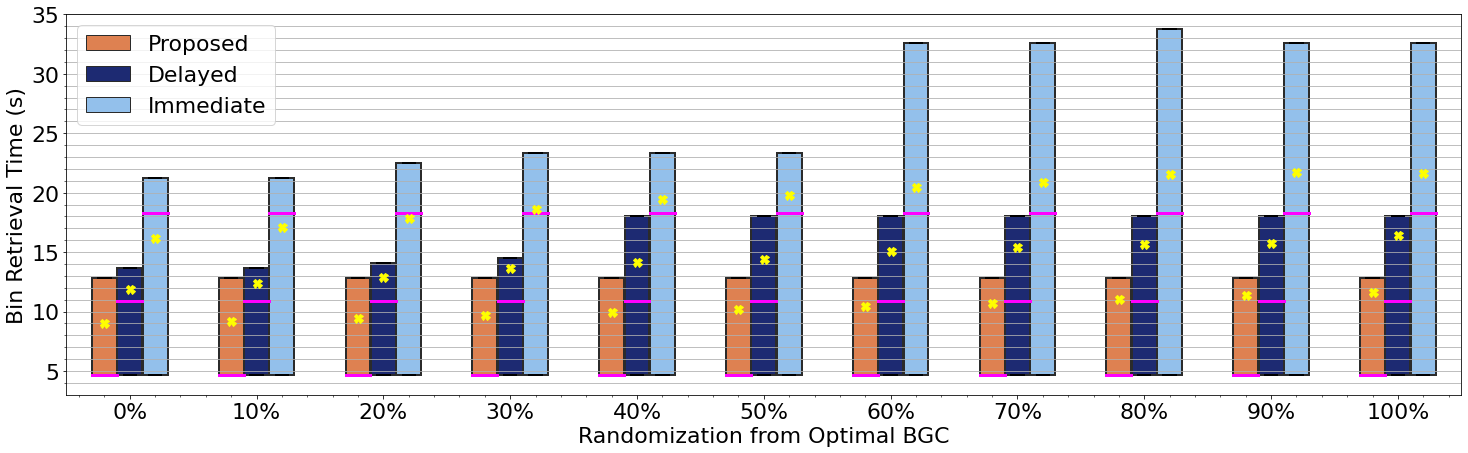}
    \caption{The trend of bin retrieval time -- IQR (box), medians (bright solid line), and means (\textbf{x}) -- versus the initial state with randomization from the optimal BGC.}
    \label{fig:BRT-IQR-FullRange}
\end{figure*}

First, we look at the interquartile range ($25\%-75\%$) of the simulation results presented in \Cref{fig:BRT-IQR-FullRange}, which contains more scenarios (i.e., initial BGCs) that have different randomization from an optimal BGC (from $0\%$ to $100\%$ in increments of $10\%$).
In all scenarios, the proposed policy results in the smallest IQR, and the IQR remains the same.
The IQRs of the delayed and immediate policies extend as the initial randomization from optimal BGCs increases.
The median of each policy remains the same in all scenarios. 
Note that the median for the proposed policy overlaps with the $25$th percentile. 
As the randomness of the initial BGC increases, the mean value under all three policies rises. Among them, the lowest mean value and rate of increase are seen in simulations using the proposed policy.
These results indicate that the proposed policy is effective in controlling the bin retrieval time to a small fluctuation, thus ensuring a more stable bin retrieval time compared to the other two policies. 

\begin{table*}[ht]
    \caption{Improvement in bin retrieval time with the proposed policy}\label{tab:thr_improve}
    \centering
    \small
        \begin{tabular}{@{}p{0.1\textwidth}p{0.15\textwidth}p{0.08\textwidth}p{0.08\textwidth}p{0.08\textwidth}p{0.08\textwidth}p{0.08\textwidth}p{0.08\textwidth}p{0.08\textwidth}@{}}
            \toprule
            Initial & Reduction (\%) & \multicolumn{7}{c}{Threshold (s)}\\
            BGC/ & Compare to & $30$& $40$&$50$& $60$& $70$& $80$& $90$\\
            \midrule
            0  & Delayed   &$77.64$    &$91.68$	&$96.12$  &$98.46$	&$99.51$	&$99.38$	&$99.13$ \\
            0   & Immediate &$95.68$	&$95.79$	&$97.21$	&$95.27$	&$95.83$	&$83.33$	&$60.00$ \\
            \cmidrule(r){1-2}   
            40  & Delayed  &$65.97$	&$69.39$	&$71.98$	&$74.30$	&$74.35$	&$76.82$	&$79.13$ \\
            40  & Immediate &$89.41$	&$85.47$	&$85.56$	&$79.18$	&$68.68$	&$68.58$	&$70.07$ \\
            \cmidrule(r){1-2}
            100 & Delayed &$56.04$	&$52.60$	&$50.95$	&$49.10$	&$48.12$	&$49.47$	&$56.19$ \\
            100 & Immediate &$81.63$	&$70.69$	&$68.18$	&$57.37$	&$47.99$	&$50.83$	&$60.26$ \\
            \bottomrule
        \end{tabular}
\end{table*}

Next, we look at the distributions of longer bin retrieval times ($\geq30$s), resulting in a longer waiting time at the workstation.
In \Cref{tab:thr_improve}, the proposed policy reduces the number of bin requests with retrieval time greater than $30s$ (and longer) by at least more than $50\%$ compared to the other two policies. 
This result indicates that the proposed policy reduces the frequency of bin retrieval time exceeding certain time thresholds, thus improving service quality.

\begin{figure*}[ht]
    \centering
    \begin{subfigure}{0.49\textwidth}
        \centering
        \includegraphics[width=\textwidth]{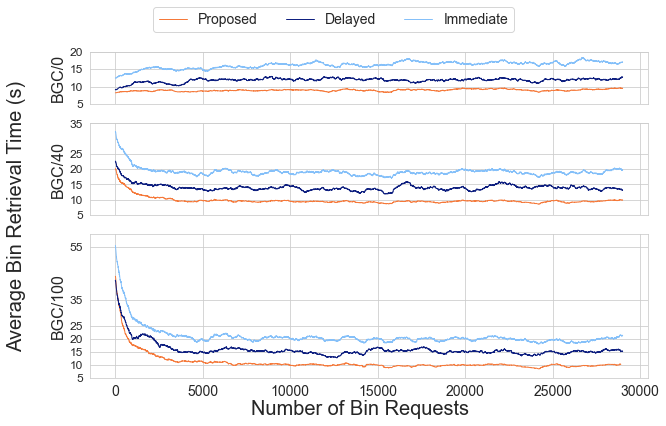}
        \caption{Moving average}
        \label{fig:averageBRT}
    \end{subfigure}
    \begin{subfigure}{0.49\textwidth}
        \centering
        \includegraphics[width=\textwidth]{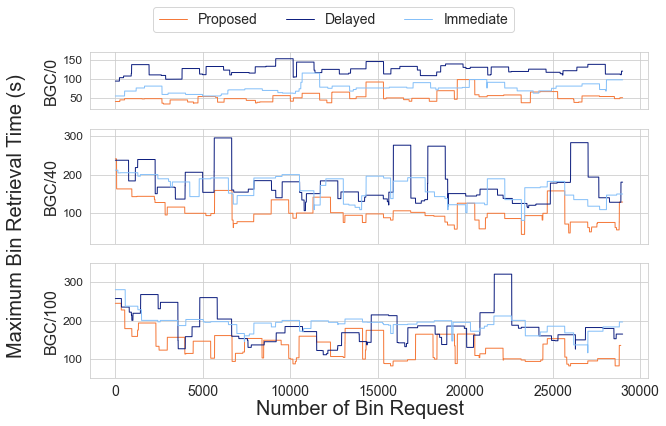}
        \caption{Moving max}
        \label{fig:maxBRT}
    \end{subfigure}
    \caption{Rolling plots with the smoothing interval of 1000 bin requests}
    \label{fig:movingBRT}
\end{figure*}

\Cref{fig:averageBRT} presents the moving average bin retrieval time with the smoothing interval of 1000 bin requests in the 100-hour simulation for the three policies and three scenarios.
In the BGC/0 scenario, starting with the optimal BGC, we see a small increase for three policies. The optimal BGC is disrupted by bin requests where the target bin is not in the surface layer, and this results in this small increase. The increase in the line representing the proposed policy is very small as the BGC is transformed among the set of the equivalent optimal BGCs.
In BGC/40 and BGC/100 scenarios, starting with a non-optimal BGC that has some popular bins stored near the bottom and some less popular bins near the top of the grid, the initial higher average bin retrieval cost is due to retrieving the deeply buried popular bins. As popular bins move to the top and unpopular bins sink to the bottom while processing bin requests, the average bin retrieval time decreases. 
With the proposed policy, the BGC transforms into a quasi-equivalent optimal BGC and/or an equivalent optimal BGC, resulting in a stable average time thereafter.

In scenarios BGC/40 and BGC/100 with the proposed policy, our simulation shows that these BGCs are transformed into quasi-equivalent optimal BGCs (with $\varepsilon=20\%$) for the first time after completing approximately $8000$ bin requests, and then are transformed among quasi-equivalent optimal BGCs.
This value explains well the lines of the proposed policy shown in \Cref{fig:averageBRT}, which become stable after the $8000$th bin request.
However, these BGCs are not transformed into equivalent optimal BGCs until the end of the simulation, as there are some out-of-place bins that have zero popularity (see \Cref{fig:binP}).
We run simulations with additional scenarios BGC/10 and BGC/20, where we set all out-of-place bins to have non-zero popularity (with the bin ID from $\llbracket 800\rrbracket$). 
The initial BGC/10 (resp.\ BGC/20) is transformed into an equivalent optimal BGC for the first time after completing about $3100$ (resp.\ $6600$) bin requests and then is transformed among the set of equivalent optimal BGCs.

An intriguing observation is that the line representing the proposed policy consistently remains below the lines representing the delayed policy (approximately $30\%$ improvement on average) and the immediate policy (approximately $50\%$ improvement on average) throughout the entire plot for any scenario.
We also observe that the lines representing the proposed policy stabilized at the same value for the three different initial BGCs. 
The lines (stabilized part) representing the other two policies increase as the randomization from the optimal BGC increases.
The moving maximum bin retrieval time with the smoothing interval of 1000 bin requests presented in \Cref{fig:maxBRT} indicates that the proposed policy effectively reduces the maximum bin retrieval time for the three scenarios.

In summary, the proposed policy effectively reduces the bin retrieval time.
More specifically, the proposed policy improves the performance of the system compared to the delayed policy and the immediate policy in the following aspects:
\begin{enumerate}
    \item More of the bin retrieval times are clustered in ranges of lower values, and the range (IQR) remains the same for all scenarios.
    \item The counts of bin retrieval times that exceed given thresholds (e.g., $\geq 30s$) are significantly reduced.
    \item The average bin retrieval times for any small portion within the simulation are always the lowest and stabilize around the same value for different scenarios.
    \item The maximum bin retrieval times for any small portion within the simulation are below the lines representing the other two policies for most of the time.
\end{enumerate}
The proposed policy guarantees the transformation of any initial BGC into a quasi-equivalent optimal BGC.
The proposed policy guarantees the transformation of initial BGCs with all out-of-place bins having bin popularity greater than zero into an equivalent optimal BGC.
\subsection{Robot Working Time}\label{Robot Runtime}

In this section, we evaluate the performance of the proposed methods measured via three metrics: the time of the delivery motion (i.e., horizontal movements of robots on top of the grid), the time of the gripper motion (i.e., vertical movements of the gripper of robots), and the overall time that is the sum of delivery and gripper times.
We compare the simulation result of the proposed policy with that of the delayed policy and the immediate policy, and the results are summarized in \Cref{tab:Robot Runtime Reduction}. 
Here, values for the three metrics are expressed as percent improvement relative to the other two state-of-the-art baselines.

\begin{table*}[ht]
\caption{Improvement in robot working time with the proposed policy}
\label{tab:Robot Runtime Reduction}
\centering
\small
\begin{tabular}{@{}lllllll@{}}
\toprule
Initial  & \multicolumn{3}{c}{Proposed vs.\ Delayed (\%)} & \multicolumn{3}{c}{Proposed vs.\ Immediate (\%)} \\
BGC & Overall & Delivery Motion & Gripper Motion & Overall & Delivery Motion & Gripper Motion\\
\cmidrule(r){1-1} \cmidrule(r){2-4} \cmidrule{5-7} 
0       & $42.99$   & $57.61$   & $35.91$   & $41.70$    & $50.51$     & $38.17$      \\
40      & $35.58$   & $55.07$   & $25.21$   & $34.43$    & $48.76$     & $28.00$      \\
100     & $24.06$   & $48.16$   & $10.22$   & $20.67$    & $39.49$     & $11.55$      \\
\bottomrule
\end{tabular}
\end{table*}

We observe that all percentages are positive, which supports the improvement of the proposed policy over the other two policies in simulations with different degrees of randomness of the initial BGC.
In the scenario starting with an optimal BGC (BGC/0), the proposed policy keeps the BGC transformed among the set of equivalent optimal BGCs, and we observed a maximum improvement of approximately $42\%$.  
Although the improvement decreases as the randomness of the initial BGC increases, the proposed policy continues to reduce the overall working time by almost $20\%$, which is the minimum improvement in the scenario starting with BGC/100. 

In summary, the proposed policy effectively reduces the robot working time compared to the delayed policy and the immediate policy for finishing the same amount of bin requests with the same initial BGC, contributing to energy efficiency for system operation.
\section{Conclusions}
\label{lab: Sec conclusion}
In this paper, we studied the problem of minimizing the bin retrieval time for a robotic-based compact storage and retrieval system.
To achieve the objective, we focused on reducing the digging depth of the requested bins in the grid.
We derived the optimal BGC that ensures the minimum expected bin retrieval cost for a single bin request.
However, the optimal BGC cannot be maintained when the requested bin is not in the surface layer.
Therefore, we constructed a set of equivalent optimal BGCs and a set of quasi-equivalent optimal BGCs, and proposed a layer complete policy that transforms any BGC into an equivalent optimal BGC (unguaranteed) and a quasi-equivalent optimal BGC (guaranteed) while processing a series of bin requests.
The set of equivalent optimal BGCs and the set of quasi-equivalent optimal BGCs are each positively invariant under the proposed policy, which ensures that the proposed structures are maintainable while processing a series of bin requests after being achieved.
The proposed solution naturally transforms any BGC into a target structure (an equivalent optimal BGC or a quasi-equivalent optimal BGC) without affecting the normal bin requests, which improves the performance of the system and makes it responsive to changes in bin demand.

We compared the proposed policy with the delayed policy and the immediate policy in discrete event simulation models. 
Our results showed that under the layer complete policy, more than $50\%$ of the required bins are retrieved from the surface layer, effectively reducing the bin retrieval time (approximately $30\%$ less than the delayed policy and $50\%$ less than the immediate policy) and the number of bin requests that exceed certain time thresholds by at least $48\%$.
In addition, the proposed policy outperforms the other two policies in saving robot working time (at least $20\%$ improvement) by completing the same amount of bin requests.

In future work, we will include the impact of the number and location of workstations, which is a significant factor that affects the delivery motion. 
In addition, a more accurate delivery motion with intelligent route planning will improve the reliability of the modeling.
Last but not least, an interesting and challenging direction waiting to be discovered is to retrieve multiple requested bins in one digging.
To solve this problem, we can refer to extensive work that has been done in other AS/RS, such as closely
store the products that are usually ordered together (e.g., in the same stack or in
adjacent stacks for RCS/RS), group and sequence bin requests before they enter the system, and assign delivery and digging tasks to robots intelligently. 

\ACKNOWLEDGMENT{%
This work is supported in part by the Natural Sciences and Engineering Research Council of Canada (NSERC).
The authors would like to extend their heartfelt appreciation to FlexSim for generously providing
the educational license for the use of their software. Their support has been invaluable in
facilitating the research and analysis conducted for this study.
}




\bibliographystyle{informs2014trsc}
\bibliography{Bibliography.bib}

\begin{thebibliography}{40}
\providecommand{\natexlab}[1]{#1}
\providecommand{\url}[1]{\texttt{#1}}
\providecommand{\urlprefix}{URL }

\bibitem[{AutoStore(n.d.)}]{autostoreCasesWeb}
AutoStore, n.d. \emph{{AutoStore Customer Cases}}. \url{https://www.autostoresystem.com/cases}.

\bibitem[{Azadeh, De~Koster, \protect\BIBand{} Roy(2019)}]{azadeh2019robotized}
Azadeh K, De~Koster R, Roy D, 2019 \emph{Robotized and automated warehouse systems: Review and recent developments}. \emph{Transportation Science} 53(4):917--945.

\bibitem[{Azadeh, Roy, \protect\BIBand{} De~Koster(2019)}]{azadeh2019design}
Azadeh K, Roy D, De~Koster R, 2019 \emph{Design, modeling, and analysis of vertical robotic storage and retrieval systems}. \emph{Transportation Science} 53(5):1213--1234.

\bibitem[{Becksch{\"a}fer et~al.(2017)Becksch{\"a}fer, Malberg, Tierney, \protect\BIBand{} Weskamp}]{beckschafer2017simulating}
Becksch{\"a}fer M, Malberg S, Tierney K, Weskamp C, 2017 \emph{Simulating storage policies for an automated grid-based warehouse system}. \emph{Computational Logistics: 8th International Conference, ICCL 2017, Southampton, UK, October 18-20, 2017, Proceedings 8}, 468--482 (Springer).

\bibitem[{Boysen, Schwerdfeger, \protect\BIBand{} Stephan(2022)}]{boysen2022review}
Boysen N, Schwerdfeger S, Stephan K, 2022 \emph{A review of synchronization problems in parts-to-picker warehouses}. \emph{European Journal of Operational Research} .

\bibitem[{Bozer \protect\BIBand{} Aldarondo(2018)}]{bozer2018simulation}
Bozer YA, Aldarondo FJ, 2018 \emph{A simulation-based comparison of two goods-to-person order picking systems in an online retail setting}. \emph{International Journal of Production Research} 56(11):3838--3858.

\bibitem[{Carlo \protect\BIBand{} Vis(2012)}]{carlo2012sequencing}
Carlo HJ, Vis IF, 2012 \emph{Sequencing dynamic storage systems with multiple lifts and shuttles}. \emph{International Journal of Production Economics} 140(2):844--853.

\bibitem[{Ekren \protect\BIBand{} Heragu(2010)}]{ekren2010simulation1}
Ekren BY, Heragu SS, 2010 \emph{Simulation-based regression analysis for the rack configuration of an autonomous vehicle storage and retrieval system}. \emph{International Journal of Production Research} 48(21):6257--6274.

\bibitem[{Ekren et~al.(2010)Ekren, Heragu, Krishnamurthy, \protect\BIBand{} Malmborg}]{ekren2010simulation2}
Ekren BY, Heragu SS, Krishnamurthy A, Malmborg CJ, 2010 \emph{Simulation based experimental design to identify factors affecting performance of avs/rs}. \emph{Computers \& Industrial Engineering} 58(1):175--185.

\bibitem[{Ekren et~al.(2012)Ekren, Heragu, Krishnamurthy, \protect\BIBand{} Malmborg}]{ekren2012approximate}
Ekren BY, Heragu SS, Krishnamurthy A, Malmborg CJ, 2012 \emph{An approximate solution for semi-open queueing network model of an autonomous vehicle storage and retrieval system}. \emph{IEEE Transactions on Automation Science and Engineering} 10(1):205--215.

\bibitem[{{Element Logic}(n.d.)}]{firstautostoreweb}
{Element Logic}, n.d. \emph{{The world’s first AutoStore-warehouse – still going strong}}. \url{https://www.elementlogic.net/cases/the-worlds-first-autostore-warehouse-still-going-strong/}.

\bibitem[{Fukunari \protect\BIBand{} Malmborg(2009)}]{fukunari2009network}
Fukunari M, Malmborg CJ, 2009 \emph{A network queuing approach for evaluation of performance measures in autonomous vehicle storage and retrieval systems}. \emph{European Journal of Operational Research} 193(1):152--167.

\bibitem[{Gagliardi, Renaud, \protect\BIBand{} Ruiz(2012)}]{gagliardi2012models}
Gagliardi JP, Renaud J, Ruiz A, 2012 \emph{Models for automated storage and retrieval systems: a literature review}. \emph{International Journal of Production Research} 50(24):7110--7125.

\bibitem[{Galka \protect\BIBand{} Scherbarth(2021)}]{simulationbasedanalysis}
Galka S, Scherbarth C, 2021 \emph{Simulationsbasierte untersuchung der grenzproduktivit{\"a}t von robotern in einem autostore-lagersystem}. \emph{Simulation in Produktion und Logistik 2021: Erlangen, 15.-17. September 2021} 197--206.

\bibitem[{Galka, Scherbarth, \protect\BIBand{} Troesch(2020)}]{galka2020autostore}
Galka S, Scherbarth C, Troesch L, 2020 \emph{Autostore--was nutzer {\"u}ber das system berichten k{\"o}nnen}. \emph{Ergebnisse einer Online-Umfrage, OTH Regensburg, doi} 10.

\bibitem[{Gharehgozli et~al.(2015)Gharehgozli, Laporte, Yu, \protect\BIBand{} De~Koster}]{gharehgozli2015scheduling}
Gharehgozli AH, Laporte G, Yu Y, De~Koster R, 2015 \emph{Scheduling twin yard cranes in a container block}. \emph{Transportation Science} 49(3):686--705.

\bibitem[{Hart, Nilsson, \protect\BIBand{} Raphael(1968)}]{hart1968formal}
Hart PE, Nilsson NJ, Raphael B, 1968 \emph{A formal basis for the heuristic determination of minimum cost paths}. \emph{IEEE Transactions on Systems Science and Cybernetics} 4(2):100--107.

\bibitem[{He \protect\BIBand{} Luo(2009)}]{he2009deadlock}
He S, Luo J, 2009 \emph{Deadlock control of autonomous vehicle storage and retrieval systems via coloured timed petri nets and digraph tools}. \emph{International Journal of Production Research} 47(12):3253--3263.

\bibitem[{Ko \protect\BIBand{} Han(2022)}]{ko2022rollout}
Ko D, Han J, 2022 \emph{A rollout heuristic algorithm for order sequencing in robotic compact storage and retrieval systems}. \emph{Expert Systems with Applications} 203:117396.

\bibitem[{Kuo, Krishnamurthy, \protect\BIBand{} Malmborg(2007)}]{kuo2007design}
Kuo PH, Krishnamurthy A, Malmborg CJ, 2007 \emph{Design models for unit load storage and retrieval systems using autonomous vehicle technology and resource conserving storage and dwell point policies}. \emph{Applied Mathematical Modelling} 31(10):2332--2346.

\bibitem[{Malmborg(2002)}]{malmborg2002conceptualizing}
Malmborg CJ, 2002 \emph{Conceptualizing tools for autonomous vehicle storage and retrieval systems}. \emph{International Journal of Production Research} 40(8):1807--1822.

\bibitem[{Malmborg(2003)}]{malmborg2003interleaving}
Malmborg CJ, 2003 \emph{Interleaving dynamics in autonomous vehicle storage and retrieval systems}. \emph{International Journal of Production Research} 41(5):1057--1069.

\bibitem[{Marchet et~al.(2012)Marchet, Melacini, Perotti, \protect\BIBand{} Tappia}]{marchet2012analytical}
Marchet G, Melacini M, Perotti S, Tappia E, 2012 \emph{Analytical model to estimate performances of autonomous vehicle storage and retrieval systems for product totes}. \emph{International Journal of Production Research} 50(24):7134--7148.

\bibitem[{Marchet et~al.(2013)Marchet, Melacini, Perotti, \protect\BIBand{} Tappia}]{marchet2013development}
Marchet G, Melacini M, Perotti S, Tappia E, 2013 \emph{Development of a framework for the design of autonomous vehicle storage and retrieval systems}. \emph{International Journal of Production Research} 51(14):4365--4387.

\bibitem[{Nguyen(2022)}]{magestoreweb}
Nguyen J, 2022 \emph{{What is the 80/20 inventory rule? 6 tips to implement it efficiently}}. \url{https://www.magestore.com/blog/what-is-the-80-20-inventory-rule/}.

\bibitem[{Olsson, Hellstr{\"o}m, \protect\BIBand{} P{\aa}lsson(2019)}]{olsson2019framework}
Olsson J, Hellstr{\"o}m D, P{\aa}lsson H, 2019 \emph{Framework of last mile logistics research: A systematic review of the literature}. \emph{Sustainability} 11(24):7131.

\bibitem[{Roodbergen \protect\BIBand{} Vis(2009)}]{roodbergen2009survey}
Roodbergen KJ, Vis IF, 2009 \emph{A survey of literature on automated storage and retrieval systems}. \emph{European Journal of Operational Research} 194(2):343--362.

\bibitem[{Roy et~al.(2015)Roy, Krishnamurthy, Heragu, \protect\BIBand{} Malmborg}]{roy2015queuing}
Roy D, Krishnamurthy A, Heragu S, Malmborg C, 2015 \emph{Queuing models to analyze dwell-point and cross-aisle location in autonomous vehicle-based warehouse systems}. \emph{European Journal of Operational Research} 242(1):72--87.

\bibitem[{Roy et~al.(2013)Roy, Krishnamurthy, Heragu, \protect\BIBand{} Malmborg}]{roy2013blocking}
Roy D, Krishnamurthy A, Heragu SS, Malmborg CJ, 2013 \emph{Blocking effects in warehouse systems with autonomous vehicles}. \emph{IEEE Transactions on Automation Science and Engineering} 11(2):439--451.

\bibitem[{Roy et~al.(2019)Roy, Nigam, de~Koster, Adan, \protect\BIBand{} Resing}]{roy2019robotf}
Roy D, Nigam S, de~Koster R, Adan I, Resing J, 2019 \emph{Robot-storage zone assignment strategies in mobile fulfillment systems}. \emph{Transportation Research Part E: Logistics and Transportation Review} 122:119--142.

\bibitem[{Tappia et~al.(2017)Tappia, Roy, De~Koster, \protect\BIBand{} Melacini}]{tappia2017modeling}
Tappia E, Roy D, De~Koster R, Melacini M, 2017 \emph{Modeling, analysis, and design insights for shuttle-based compact storage systems}. \emph{Transportation Science} 51(1):269--295.

\bibitem[{Tjeerdsma(2019)}]{tjeerdsma2019redesign}
Tjeerdsma S, 2019 \emph{Redesign of the AutoStore order processing line}. Master's thesis, University of Twente.

\bibitem[{Trost, Kartnig, \protect\BIBand{} Eder(2022)}]{trost2022simulation}
Trost P, Kartnig G, Eder M, 2022 \emph{Simulation des grenzdurchsatzes von autostore-systemen}. \emph{Logistics Journal: Proceedings} 2022(18).

\bibitem[{Valle \protect\BIBand{} Beasley(2021)}]{valle2021order}
Valle CA, Beasley JE, 2021 \emph{Order allocation, rack allocation and rack sequencing for pickers in a mobile rack environment}. \emph{Computers \& Operations Research} 125:105090.

\bibitem[{Van~Gils et~al.(2018)Van~Gils, Ramaekers, Caris, \protect\BIBand{} De~Koster}]{van2018designing}
Van~Gils T, Ramaekers K, Caris A, De~Koster RB, 2018 \emph{Designing efficient order picking systems by combining planning problems: State-of-the-art classification and review}. \emph{European Journal of Operational Research} 267(1):1--15.

\bibitem[{Wang, Yang, \protect\BIBand{} Li(2020)}]{wang2020travel}
Wang K, Yang Y, Li R, 2020 \emph{Travel time models for the rack-moving mobile robot system}. \emph{International Journal of Production Research} 58(14):4367--4385.

\bibitem[{Weidinger, Boysen, \protect\BIBand{} Briskorn(2018)}]{weidinger2018storage}
Weidinger F, Boysen N, Briskorn D, 2018 \emph{Storage assignment with rack-moving mobile robots in kiva warehouses}. \emph{Transportation Science} 52(6):1479--1495.

\bibitem[{Yang et~al.(2022)Yang, Gao, Chen, Cao, Feng, Li, Dou, Fu, Liu, \protect\BIBand{} Heng}]{yang2022sesr}
Yang B, Gao X, Chen K, Cao R, Feng Y, Li X, Dou Q, Fu CW, Liu YH, Heng PA, 2022 \emph{Sesr: Self-ensembling sim-to-real instance segmentation for auto-store bin picking}. \emph{2022 IEEE/RSJ International Conference on Intelligent Robots and Systems (IROS)}, 5987--5994 (IEEE).

\bibitem[{Zou et~al.(2017)Zou, Gong, Xu, \protect\BIBand{} Yuan}]{zou2017assignment}
Zou B, Gong Y, Xu X, Yuan Z, 2017 \emph{Assignment rules in robotic mobile fulfilment systems for online retailers}. \emph{International Journal of Production Research} 55(20):6175--6192.

\bibitem[{Zou, Koster, \protect\BIBand{} Xu(2018)}]{zou2018operating}
Zou B, Koster RD, Xu X, 2018 \emph{Operating policies in robotic compact storage and retrieval systems}. \emph{Transportation Science} 52(4):788--811.

\end{thebibliography}


%

%
%
\clearpage
\begin{APPENDICES}
\section{Notations for System Analysis}
\label{APX:Notations for System Analysis}
\begin{table}[ht]
\small
    \caption{Notations for System Analysis}
    \centering
    \begin{tabularx}{\columnwidth}{@{}p{0.1\linewidth} p{0.84\linewidth}l@{}}
    \toprule
    Notation     & Description  \\
    \midrule
    $\mathbf{B}$    & A feasible BGC, which is an $H \times M$ matrix. \\
    $\mathbb{B}$    & Set of all optimal BGCs. \\ 
    $\hat{\mathbb{B}}$   & Set of equivalent optimal BGCs, $\hat{\mathbb{B}}=\cup_{\mathbf{B}\in\mathbb{B}}[\mathbf{B}]$.\\
    $\mathbf{B}^k$    & BGC after the $k$th bin request, $k\in\{0,1,2,\dots\}$, where $\mathbf{B}^{0}$ represents the initial BGC.\\
    $[\mathbf{B}]$  & Equivalent class of $\mathbf{B}$.\\
    $[\mathbf{B}]_{\varepsilon}$  & Quasi-equivalent class of $\mathbf{B}$ with respect to $\varepsilon$.\\
    $\mathbf{B}_{l,m}$        & The ID of the bin in the cell located in stack $m$ and layer $l$. \\
    $H$     & Grid height, by the number of cells. Also, the maximum fill level of a stack.   \\
    $\overline{H}$   & The minimum fill level of a system, $\overline{H}=H(1-\tau)$. \\
    $\textbf{\emph{h}}$   & Fill level vector of the grid. \\
    $h_c$   & Fill level of occupied stacks within the grid. \\
    $h_e$   & Empty level of occupied stacks within the grid, $h_e=H-h_c$. \\
    $l$     & Layer in the grid, $l\in\llbracket H\rrbracket$.   \\
    $M$     & Number of the stacks in the grid.\\
    $\llbracket M\rrbracket,\ m$   & Set includes all stack IDs, $m\in\llbracket M\rrbracket$.\\
    $m_f$   & Number of occupied stacks, $m_f\leq M$. \\
    $N$     & Number of the bins stored in the grid, $N\leq H\cdot M$.\\
    $\llbracket N\rrbracket,\ n$   & Set includes all bin IDs, $n\in\llbracket N\rrbracket$.\\
    $\Omega(\mathbf{B})$    & Function computes the fill level of a BGC $\mathbf{B}$, $\Omega(\mathbf{B})=h_c$.\\
    $p_n$   & Popularity of bin $n$, $\sum_{n\in \llbracket N\rrbracket}p_n=1$.    \\
    $p_{\mathbf{B}_{l,m}}$     & Popularity of the bin stored in stack $m$ and layer $l$ within the grid, $\sum_{l\in \llbracket H\rrbracket}\sum_{m\in \llbracket M\rrbracket}p_{\mathbf{B}_{l,m}}=1$. \\
    $\pi_l(\mathbf{B})$   & Function computes the probability that the bin is in layer $l$ within a BGC $\mathbf{B}$, $\pi_l(\mathbf{B})=\sum_{m\in \llbracket M\rrbracket} p_{\mathbf{B}_{l,m}}$, $\sum_{l\in \llbracket H\rrbracket}\pi_l(\mathbf{B})=1$.    \\
    $\Phi$  & Set of all feasible BGCs.\\
    $\boldsymbol{\sigma}$ & Sequence of bin requests, where $\sigma_k\in\boldsymbol{\sigma}$ represents the bin ID of the $k$th requested bin.\\
    $\tau$  & Percentage of the total cells that were put aside for expansion in the future.\\
    $\varepsilon$ & Percentage of popular bins out of all bins.\\
    \bottomrule
    \end{tabularx}
    \label{tab:notation}
\end{table}

\section{Lookup table to compute the cost of placing the bins removed above the target bin on top of the nearby stacks}
\label{APX:LUT}
\begin{table*}[ht]
    \caption{LUT of $C_{r,2}(l,h_e)$, where symbol - means the value does not exist.}
    \label{tab:Cr2_LUT}
    \centering
    \small
        \begin{tabularx}{\textwidth}{XXXXXXXXXXXXXXXXXXXXXXX}
            \toprule
            &\multicolumn{22}{c}{$l$}\\
            $h_e$ & 1 & 2 & 3 & 4 & 5 & 6 & 7 & 8 & 9 & 10 & 11 & 12 & 13 & 14 & 15 & 16 & 17 & 17 & 19 & 20 & 21 & 22 \\
            \cmidrule(r){1-1} \cmidrule(l){2-23}
                0 & 0 & 0 & 0 & 0 & 0 & 0 & 0 & 0 & 0 & 0 & 0 & 0 & 0 & 0 & 0 & 0 & 0 & 0 & 0 & 0 & 0 & 0 \\
                1 & - & 0 & 2 & 2 & 4 & 4 & 6 & 6 & 8 & 8 & 10 & 10 & 12 & 12 & 14 & 14 & 16 & 16 & 18 & 18 & 20 & 20 \\
                2 & - & - & 0 & 4 & 6 & 6 & 10 & 12 & 12 & 16 & 18 & 18 & 22 & 24 & 24 & 28 & 30 & 30 & 34 & 36 & 36 & 40 \\
                3 & - & - & - & 0 & 6 & 10 & 12 & 12 & 18 & 22 & 24 & 24 & 30 & 34 & 36 & 36 & 42 & 46 & 48 & 48 & 54 & 58 \\
                4 & - & - & - & - & 0 & 8 & 14 & 18 & 20 & 20 & 28 & 34 & 38 & 40 & 40 & 48 & 54 & 58 & 60 & 60 & 68 & 74 \\
                5 & - & - & - & - & - & 0 & 10 & 18 & 24 & 28 & 30 & 30 & 40 & 48 & 54 & 58 & 60 & 60 & 70 & 78 & 84 & 88 \\
                6 & - & - & - & - & - & - & 0 & 12 & 22 & 30 & 36 & 40 & 42 & 42 & 54 & 64 & 72 & 78 & 82 & 84 & 84 & 96 \\
                7 & - & - & - & - & - & - & - & 0 & 14 & 26 & 36 & 44 & 50 & 54 & 56 & 56 & 70 & 82 & 92 & 100 & 106 & 110 \\
                8 & - & - & - & - & - & - & - & - & 0 & 16 & 30 & 42 & 52 & 60 & 66 & 70 & 72 & 72 & 88 & 102 & 114 & 124 \\
                9 & - & - & - & - & - & - & - & - & - & 0 & 18 & 34 & 48 & 60 & 70 & 78 & 84 & 88 & 90 & 90 & 108 & 124 \\
                10 & - & - & - & - & - & - & - & - & - & - & 0 & 20 & 38 & 54 & 68 & 80 & 90 & 98 & 104 & 108 & 110 & 110 \\
                11 & - & - & - & - & - & - & - & - & - & - & - & 0 & 22 & 42 & 60 & 76 & 90 & 102 & 112 & 120 & 126 & 130\\
            \bottomrule
        \end{tabularx}
\end{table*}
\end{APPENDICES}

\end{document}